\newcommand{\punt}[1]{}
\newcommand{\trp}{{^\top}} % transpose
\renewcommand{\eqref}[1]{eq.~\ref{eq:#1}}
\newcommand{\Nrm}{\mathcal{N}}
\newcommand{\figref}[1]{Fig.~\ref{fig:#1}}  % use for citing figs
\newcommand{\secref}[1]{Sec.~\ref{sec:#1}}
\newcommand{\tabref}[1]{Table ~\ref{tab:#1}}  % use for citing tables
\newcommand{\algoref}[1]{Algorithm~\ref{algo:#1}}  % use for citing algorithms
\newcommand{\propref}[1]{Prop.~\ref{prop:#1}}
\newcommand{\lemmaref}[1]{Lemma.~\ref{lemma:#1}}
\newcommand{\suppsecref}[1]{Supplementary Sec.~\ref{supp:#1}}  
\newcommand{\vx}{\mathbf{x}}
\newcommand{\Dat}{\mathcal{D}}
\newcommand{\vf}{\mathbf{f}}
\newcommand{\vphi}{\mathbf{\ensuremath{\bm{\phi}}}}
\newcommand{\vu}{\mathbf{u}}
\newcommand{\vm}{\mathbf{m}}
\newcommand{\vz}{\mathbf{z}}
\newcommand{\vmu}{\mathbf{\ensuremath{\bm{\mu}}}}
\newcommand{\vtheta}{\mathbf{\ensuremath{\bm{\theta}}}}
\newcommand{\mI}{\mathbf{I}}
\newcommand{\vn}{\mathbf{n}}
\newcommand{\Em}{\mathbb{E}}
\newcommand{\vy}{\mathbf{y}}
\newcommand{\vh}{\mathbf{h}}
\newtheorem{thm}{Theorem}[section]
\newtheorem{lem}{Lemma}[section]
\newtheorem{prop}[thm]{Proposition}
\newtheorem{remark}{Remark}
\def\argmin{\mathop{\rm arg\,min}}
\def\Pr{\ensuremath{\text{Pr}}}
\begin{document}

% \title{Differentially Private Mean Embeddings with  Random Features
% \\
% for Synthetic Data Generation}

\twocolumn[
\aistatstitle{DP-MERF: Differentially Private Mean Embeddings with Random Features 
for Practical Privacy-Preserving Data Generation}
\aistatsauthor{
Frederik Harder${}^{* 1, 2}$
\And 
Kamil Adamczewski${}^{* 1, 3}$
\And 
Mijung Park${}^{1, 2}$
}
\aistatsaddress{    
${}^1$ Max Planck Institute for Intelligent Systems, Tübingen, Germany \\
${}^2$ Department of Computer Science, University of Tübingen, Tübingen, Germany \\
${}^3$ D-ITET, ETH Zurich, Switzerland\\
 \texttt{\{fharder|kadamczewski|mpark\}@tue.mpg.de} 
}
]

%%%%%%%%%%%%%%%
% Abstract with many changes
%%%%%%%%%%%%%%%

% \begin{abstract}
% We propose a novel privacy-preserving synthetic data generation paradigm, \textit{Differentially-Private} using kernel \textit{Mean Embeddings with Random Features} (\textit{DP-MERF}), for two crucial benefits.
% First, DP-MERF incurs a \textit{minimal privacy cost} for training a generator. 
% This is because we can conveniently detach the data-dependent term from the rest that requires repeated updates during the generator training. 
% As a result, we privatize the data-dependent term \textit{once and for all}, and then re-use it many times during training without further privacy costs.
% Second, in DP-MERF, computing the \textit{sensitivity} is \textit{analytically tractable} due to its norm-boundedness of random features. In fact, in DP-MERF, the norm of random features is bounded by 1 by construction. The resulting sensitivity is on the order of, 1 over the number of training data points. Consequently, a moderate size of training data can significantly reduce the sensitivity. 
% In addition, having an analytic sensitivity removes the necessity of searching for an extra hyper-parameter, clipping norm to handle a generator network's unknown sensitivity, which arises in many DP gradient perturbation methods. 
% These two benefits allow us to achieve drastically better privacy-utility trade-offs than existing methods when tested on widely-used real-world and benchmark datasets. 
% \end{abstract}

%%%%%%%%%%%%%%%
% Abstract with minor changes
%%%%%%%%%%%%%%%

\begin{abstract}
    We propose a differentially private data generation paradigm using random feature representations of kernel mean embeddings when comparing the distribution of true data with that of synthetic data. We exploit the random feature representations for two important benefits.
First, we require a minimal privacy cost for training deep generative models. This is because unlike kernel-based distance metrics that require computing the kernel matrix on all pairs of true and synthetic data points, we can detach the data-dependent term from the term solely dependent on synthetic data. Hence, we need to perturb the data-dependent term only once and then use it repeatedly during the generator training.
Second, we can obtain an analytic sensitivity of the kernel mean embedding as the random features are norm bounded by construction. This removes the necessity of hyper-parameter search for a clipping norm to handle the unknown sensitivity of a generator network. We provide several variants of our algorithm, differentially-private mean embeddings with random features (DP-MERF) to jointly generate labels and input features for datasets such as heterogeneous tabular data and image data. Our algorithm achieves drastically better privacy-utility trade-offs than existing methods when tested on several datasets.

\end{abstract}

% \begin{refsection}

% \maketitle

%%%%%%%%%%%%%%%%%%%%%%%%%%%%%%%%%%%%%%%%%
%%%%%%%   Introduction %%%%%%%%%%%%%%%%%%%
%%%%%%%%%%%%%%%%%%%%%%%%%%%%%%%%%%%%%%%%%

\section{Introduction}
\label{sec:Introduction}

Differential privacy (DP) is a gold standard privacy notion that is widely used in many applications in machine learning. However, due to its composability,  
%
% can serve as strong privacy guarantee on a sensitive dataset when only DP mechanisms access it. 
%
% Unfortunately, 
every access to data reduces the privacy guarantee, which limits the number of times one can query sensitive data before a desired privacy level is exceeded.
% can be queried 
% before DP guarantees no longer satisfy the desired level of privacy. 
% until reaches the desired privacy level guarantee 
%
Differentially private data generation solves this problem of limited access by creating a synthetic dataset that is \emph{similar} to the true dataset using DP mechanisms. This process also comes at a privacy cost, but afterwards, the synthetic dataset can be used in place of the true one for unlimited time without further loss of privacy. 
% The goal is to create synthetic data which is similar enough to the real data that it can be used in its place with an acceptable loss in utility.

Classical approaches to differentially private data generation typically assume a certain class of pre-specified queries. These DP algorithms produce a privacy-preserving synthetic database that is similar to the privacy-sensitive original data for that \textit{fixed query class}
\cite{Mohammed:2011:DPD:2020408.2020487, 10.1007/978-3-642-15546-8_11, NIPS2012_4548, 7911185}.
However, specifying a query class in advance, significantly limits the flexibility of the synthetic data, if data analysts hope to perform other machine learning tasks.

To overcome this inflexibility, recent papers on DP data generation have utilized deep generative modelling.
The majority of these approaches is based on the generative adversarial networks (GAN) \cite{NIPS2014_5423} framework, where a discriminator and a generator play a  min-max form of game to optimize a given distance metric between the true and  synthetic data distributions. 
Most approaches have used either the \textit{Jensen-Shannon divergence} \cite{DP_EM, DP_CGAN, PATE_GAN}, or the Wasserstein distance \cite{DPGAN, DBLP:conf/sec/FrigerioOGD19}. For more details on different divergence metrics, see \suppsecref{background_distances}.

Another popular choice of distance metric for generative modelling is Maximum Mean Discrepancy (MMD). MMD can compare two probability measures in terms of all possible moments. Therefore, there is no information loss due to a selection of a certain set of moments. The MMD estimator is in closed form (\eqref{MMD_full}) and easy to compute by the pair-wise evaluations of a kernel function using the points drawn from the true and the generated data distributions.
 
In this work, we propose to use a particular form of MMD via \textit{random Fourier feature} representations \cite{rahimi2008random} of kernel mean embeddings for DP data generation. While MMD can be used within a GAN framework as well (see e.g. \cite{mmd_gan}) we choose a much simpler method, which is particularly suited for training with DP constraints. 
%

% The random feature representations of mean embeddings (\eqref{MMD_rf}) we use as an objective separate the mean embedding of the true data distribution (data-dependent) from that of the synthetic data distribution (data-independent). 
In the objective we use (\eqref{MMD_rf}), the mean embedding of the true data distribution (data-dependent) is separate from the embedding of the synthetic data distribution (data-independent).
Hence, only the data-dependent term requires privatization. 
Random features provide an analytic sensitivity of the mean embedding, which allows us to release a DP version of this embedding through a DP mechanism as we explain below. With the privatized data embedding and the synthetic data embedding, our objective no longer directly accesses the data and can be optimized freely to train  a data generator.
Our contributions are summarized below. 
% We  accommodate the diverse needs in privacy-preserving data generation.
%
% - few hyperparameters, no minmax
% - simple DP analysis, low DP cost
% - free optimization, high utility

% \textbf{(1) We provide a simple, 
% computationally efficient, 
% efficient,
% and practical algorithm for DP data generation.}
 \textbf{(1) We provide a simple algorithm for DP data generation, which improves on existing methods both in privacy and utility.}
\begin{itemize}[topsep=0pt]
\setlength{\itemsep}{1pt}
\item \textit{Simple to optimize:} 
Since the objective of the optimization contains only a specific private release of data, there are no privacy induced constraints on model choice and optimization method due to privacy. In contrast, methods with private releases as part of the training loop are generally constrained in the number of iterations. As a specific example, DP-SGD requires well-defined sample-wise gradients, which prohibits the use of batch-normalization.
% Further, the sensitivity of DP-SGD increases with the number of weights, which leads to an increase in the noise required during training, making large networks infeasible.
Further, increasing the number of trained weights raises the sensitivity of DP-SGD \cite{DP_SGD} and with it the required strength of gradient perturbation, making large networks infeasible.
Our method also avoids the cumbersome min-max optimization present in GAN based approaches and requires only a minimal number of hyperparameters\footnote{Hyperparameters in our method are the number of random features, a kernel parameter, and the learning rate.}. 
% Additionally, our method avoids the cumbersome min-max optimization, which often results in high cost computations to find the right setup. This exacerbates with more hyperparameters such as empirical sensitivities present in GAN based approaches. Our method requires only minimization with a minimal number of hyperparameters\footnote{Hyperparameters in our method are the number of random features, a kernel parameter, and the learning rate.}. 
\item \textit{Strong privacy:} 
Computing the \textit{sensitivity} in our method is \textit{analytically tractable} due to its norm-boundedness of random features. In fact, the norm of random features we use is bounded by 1 by construction. The resulting sensitivity is on the order of 1 over the number of training data points. Consequently, a moderate size of training data can significantly reduce the sensitivity. 
By requiring only a single DP-release with such a low sensitivity, our method can provide strong DP guarantee more easily than methods which access the data on each training iteration.
\item \textit{High utility:}
% Since the objective is already made private, the optimization is not constrained in any way, which leads to better results.
We show in our experiments that our method releases private data with higher utility for downstream tasks than comparison methods.  
This contrast is particularly stark on MNIST, where our model at a strong privacy guarantee of $(0.2, 10^{-5})$-DP outperforms all GAN-based comparison methods, even though they are trained with much weaker privacy of at most $(9.6, 10^{-5})$-DP.

\item \textit{Theoretical study:} 
We provide an error bound on the objective to theoretically quantify the effect of noise added for privacy to the random feature representation of MMD objective. 
This bound provides an informative way to select the random feature dimension, given a dataset size and a desired privacy level.  
% \item \textit{Simple:} 
% Random feature representations of mean embeddings (\eqref{MMD_rf}) separate the mean embedding of the true data distribution (data-dependent) from that of the synthetic data distribution (data-independent). Hence, only the data-dependent term needs privatization. Random features provide an analytic sensitivity of the mean embedding, with which we simply adjust the noise level of a DP mechanism to produce DP data.  
% \item \textit{Computationally efficient:} 
% As we have an analytic sensitivity, we do not need to search for the ``right" clipping bound
% \footnote{Nobody reports how much computational power they used to find the right clipping norm in the existing DP GAN-based methods. From our experience, this step requires a significant amount of compute power as in each clipping norm candidate we need to train an entire generative model coupled with a discriminator.} which is necessary in many existing DP GAN-based methods. This reduces computational cost significantly, i.e., the computational cost of our method reduces to the usual SGD-based training of a generator.
% \item \textit{Practical:}  
% As the only term that needs privatization is simply the mean embedding of the true data distribution, we  perturb the term once-for-all and then use it until the end of the training, resulting in a very low privacy loss for training deep generative models. Hence, our method achieves better privacy-utility trade-offs compared to existing GAN-based methods.
\end{itemize}
    
\textbf{(2) Our algorithm accommodates several needs in privacy-preserving data generation.} 
\begin{itemize}[topsep=0pt]
\item \textit{Generating input and output pairs jointly}:
We treat both input and output to be privacy-sensitive. This is different from the conditional-GAN type of methods, where the class distribution is treated as non-sensitive, which increases the risk of successful membership inference, particularly in imbalanced datasets where some classes contain only a small number of samples. %
\item \textit{Generating imbalanced and heterogeneous tabular data}: 
% This is an important condition for a DP method to be useful, as real world datasets frequently exhibit class-imbalance and heterogeneity.
%Real world datasets frequently exhibit class-imbalance and heterogeneity. 
Real world datasets may exhibit large variation in data types and class sizes.
By addressing both of these issues, we ensure that our algorithm is applicable to a wide variety of datasets. 
\end{itemize}

We start by describing relevant background information in \secref{background} before introducing our method in \secref{Methods} and \secref{dpmerf_plus}, followed by an overview of related work in \secref{related_work} and experiments in \secref{experiments}. 

\section{Background} \label{sec:background}

In the following, we describe the kernel mean embeddings with random features and differential privacy, which our model will use in \secref{Methods}. 

\subsection{Maximum Mean Discrepancy}

% Given the samples drawn from two probability distributions: $X_{m}=\{x_{i}\}_{i=1}^{m} \sim P$ and $X'_{n}=\{x'_{i}\}_{i=1}^{n} \sim Q$, 
Given a positive definite kernel
$k\colon\mathcal{X}\times\mathcal{X}$, the MMD between two distributions $P,Q$ is defined as \cite{Gretton2012}
\begin{align}\label{eq:pop_mmd}
\mathrm{MMD}^2(P,Q)&=
 \mathbb{E}_{x,x'\sim P}k(x,x')+\mathbb{E}_{y,y'\sim Q}k(y,y') \nonumber \\
 & \qquad -2\mathbb{E}_{x\sim P}\mathbb{E}_{y\sim Q}k(x,y).
\end{align}
According to the Moore--Aronszajn theorem, there exists a unique Hilbert
space $\mathcal{H}$ on which $k$ defines an inner product. Hence, we can find a feature map $\phi\colon\mathcal{X}\to\mathcal{H}$
such that $k(x,y)=\left\langle \phi(x),\phi(y)\right\rangle _{\mathcal{H}}$, 
where $\left\langle \cdot,\cdot\right\rangle _{\mathcal{H}}=\left\langle \cdot,\cdot\right\rangle $
denotes the inner product on $\mathcal{H}$. Using this fact, we can rewrite the MMD in \eqref{pop_mmd} as \cite{Gretton2012}
\begin{align*}
\mathrm{MMD}(P,Q) & =\big\|\mathbb{E}_{x\sim P}[\phi(x)]-\mathbb{E}_{y\sim Q}[\phi(y)]\big\|_{\mathcal{H}},
\end{align*}
where $\mathbb{E}_{x\sim P}[\phi(x)]\in\mathcal{H}$ is known as the
(kernel) mean embedding of $P$, and exists if $\mathbb{E}_{x\sim P}\sqrt{k(x,x)}<\infty$
\cite{Smola2007}. 
The MMD can be interpreted as the distance
between the mean embeddings of the two distributions. 
If $k$ is a
characteristic kernel \cite{Sriperumbudur2011}, then $P\mapsto\mathbb{E}_{x\sim P}[\phi(x)]$
is injective, and MMD forms a metric, implying that $\mathrm{MMD}(P,Q)=0$, if and only if $P=Q$. 

Given the samples drawn from two probability distributions: $X_{m}=\{x_{i}\}_{i=1}^{m} \sim P$ and $X'_{n}=\{x'_{i}\}_{i=1}^{n} \sim Q$, we can estimate\footnote{Note that this particular MMD estimator is biased.} the MMD by sample averages \cite{Gretton2012}:
\begin{align}\label{eq:MMD_full}
\widehat{\mathrm{MMD}}^2(X_{m},X'_{n}) &= \tfrac{1}{m^2}\sum_{i,j=1}^{m}k(x_{i},x_{j}) +\tfrac{1}{n^2}\sum_{i,j=1}^{n}k(x'_{i},x'_{j}) 
\nonumber \\
& \qquad 
 -\tfrac{2}{mn}\sum_{i=1}^{m}\sum_{j=1}^{n}k(x_{i},x'_{j}).
\end{align}
However, the total computational cost of $
\widehat{\mathrm{MMD}}(X_{m},X'_{n}) $ is $O(mn)$, which is prohibitive for large-scale datasets.

\subsection{Random feature mean embeddings}\label{subsec:RFME}

%
% When applied in the ABC setting, one input set to $\widehat{\mathrm{MMD}}$ is
% $Y^{*}$ and the other is a pseudo dataset $Y_{t}\sim p(\cdot|\theta_{t})$ generated
% from the simulator, for some $\theta_{t}\sim\pi(\theta)$.
%
% The total computational cost of $
% \widehat{\mathrm{MMD}}(X_{m},X'_{n}) $ is $O(mn)$, which is prohibitive for large-scale datasets. 
%
% To reduce the complexity, 
% sub-quadratic time MMD estimators exist e.g., an
% unbiased linear-time estimator \cite[Section 6]{Gretton2012}. 
%
% unbiased linear-time estimators exist \cite[Section 6]{Gretton2012}. 
A fast linear-time MMD estimator can be achieved by considering an approximation to the kernel function $k(x,x')$ with an inner product of finite dimensional feature vectors, i.e.,  $k(x,x')\approx \hat{\phi}(x)^{\top}\hat{\phi}(x')$ where $\hat{\phi}(x)\in\mathbb{R}^{D}$ and $D$ is the number of
features. 
The resulting approximation of the MMD estimator given in \eqref{MMD_full} can be computed in $O(m+n)$, i.e., linear in the sample size:
\begin{align}\label{eq:MMD_rf}
\widehat{\mathrm{MMD}}_{rf}^{2}(P,Q)=\bigg\|\tfrac{1}{m}\sum_{i=1}^{m}\hat{\phi}(x
_i)-\tfrac{1}{n}\sum_{i=1}^{n}\hat{\phi}(x'_i)\bigg\|_{2}^{2},  
\end{align}
%
%\paragraph{Random Fourier features}
%
One popular approach to obtaining such $\hat{\phi}(\cdot)$
is based on random Fourier features \cite{rahimi2008random} which can be applied to any translation invariant kernel, i.e., $k(x,x')=\tilde{k}(x-x')$ for some function $\tilde{k}$. According to Bochner's theorem \cite{Rudin2013}, $\tilde{k}$ can be written as
$
\tilde{k}(x-x') =\int e^{i\omega^{\top}(x-x')}\,\mathrm{d}\Lambda(\omega)
 =\mathbb{E}_{\omega\sim\Lambda}\cos(\omega^{\top}(x-x')),
$
where $i=\sqrt{-1}$ and due to positive-definiteness of $\tilde k$, its Fourier transform
$\Lambda$ is nonnegative and can be treated as a probability measure. By drawing
random frequencies $\{\omega_{i}\}_{i=1}^{D}\sim\Lambda$, where $\Lambda$ depends on the kernel,
(e.g., a Gaussian kernel $k$ corresponds to
normal distribution $\Lambda$), 
% and $\{b_{i}\}_{i=1}^{D}\sim U[0,2\pi]$,
$\tilde{k}(x-x')$ can be
approximated with a Monte Carlo average. 
The vector of random Fourier features is given by 
\begin{align}\label{eq:RF}
    \hat{\vphi}(x)=(\hat{\phi}_{1}(x),\ldots,\hat{\phi}_{D}(x))^{\top}
\end{align} where each coordinate is defined by 
%
% the random Fourier features ($J$-dimensional vectors)are given by
\begin{align}
    \hat\phi_{j}(x) & = \sqrt{2/D}\;\cos(\omega_j\trp x), \nonumber \\
      \hat{\phi}_{j+D/2}(x)& =\sqrt{2/D}\sin(\omega_{j}^{\top}x), \nonumber 
\end{align} for $j=1, \cdots, D/2$.
%
% $\hat{\phi}_{j}(x)=\sqrt{2/D}\cos(\omega_{j}^{\top}x)$ and $\hat{\phi}_{j+D/2}(x)=\sqrt{2/D}\sin(\omega_{j}^{\top}x)$ for $j=1, \cdots, D/2$. Notice that by construction $\|\hat{\vphi}(\cdot)\|_2 = 1$, regardless of the frequency, as $\sin^2(\cdot) + \cos^2(\cdot) = 1$.
%
%\Lambda is normalized whenever k(0)=1, so Gaussian RBF has a normalized \Lambda!
%
The approximation error due to these random features was studied in  \cite{Dougal_UAI}. 
% \mpsay{Mention Dougal's paper here}

\subsection{Differential privacy}\label{subsec:DP}

% \textbf{Differential Privacy}
% A mechanism is called $\epsilon$-differentially private if
Given privacy parameters $\epsilon \geq 0$ and $\delta \geq 0$, a mechanism $\mathcal{M}$ is  ($\epsilon$, $\delta$)-DP if and only if for all possible sets of mechanism outputs $S$ and all neighbouring datasets $\Dat$, $\Dat'$ differing by a single entry, the following equation holds:
\begin{align}
\Pr[\mathcal{M}(\Dat) \in S] \leq e^\epsilon \cdot \Pr[\mathcal{M}(\Dat') \in S] + \delta
\end{align}
% $|L^{(o)}| \leq \epsilon, \forall o, \Dat, \Dat'$, where $L^{(o)}$ is the \emph{privacy loss} of an outcome $o$ defined by
% \mbox{$
% L^{(o)} = \log \frac{Pr(\mathcal{M}(\Dat) = o)}{Pr(\mathcal{M}(\Dat') = o)}.
% $}
%
% %
% A mechanism $\mathcal{M}$ is ($\epsilon, \delta$)-DP, if and only if
% $|L^{(o)}| \leq \epsilon$, with probability at least $1-\delta$.
% %
% DP guarantees a limited amount of information the algorithm reveals about any one individual.
%
%
A DP mechanism guarantees a limit on the amount of information revealed about any one individual in the dataset. Typically this guarantee is achieved by adding randomness to the algorithms' output. Let a function $h: \Dat \mapsto \mathbb{R}^p$, which is computed on sensitive data $\Dat$, output a $p$-dimensional vector. We can add noise to $h$ for privacy, where the level of noise is calibrated to the {\it{global sensitivity}}
\cite{dwork2006our}, $\Delta_h$, defined by the maximum difference in terms of $L_2$-norm $||h(\Dat)-h(\Dat') ||_2$, for neighbouring $\Dat$ and $\Dat'$ (i.e. $\Dat$ and $\Dat'$ have one sample difference by replacement). 
The \textit{Gaussian mechanism} that we will use in this paper outputs $\widetilde{h}(\Dat) = h(\Dat) + \Nrm(0, \sigma^2 \Delta_h^2\mathbf{I}_p)$. 
The perturbed function $\widetilde{h}(\Dat) $ is $(\epsilon, \delta)$-DP, where $\sigma$ is a function of $\epsilon$ and $\delta$. For a single application of the mechanism, $\sigma \geq \sqrt{2 \log (1.25/\delta)}/\epsilon$ holds for $\epsilon\leq 1$.
The auto-dp package by \cite{wang2019subsampled} computes the relationship between $\epsilon, \delta, \sigma$ numerically, which we use in our method. 

%%%%%%%%%%%%%

% \mj{This is valid for $\epsilon\leq 1$, which we use throughout the paper. For ($\epsilon\leq 1$, $\delta=10^{-5}$), the difference from \cite{wang2019subsampled} seems negligible.  We use the standard neighbourhood relation as in the comparison methods.}

%%%%%%%%%%%%%5

% \propto \frac{\Delta_h}{\epsilon} \sqrt{2\log(1.25/\delta)}/\epsilon$, for $\epsilon \in (0,1)$ (See the proof of  Theorem 3.22 in \cite{Dwork14} why $\sigma$ has such a form). 

There are two important properties of DP.
The \textit{composability} theorem \cite{dwork2006our} states that the strength of privacy guarantee degrades in a measurable way with repeated use of DP-algorithms. 
This allows us to combine the results of different private mechanisms in \secref{dpmerf_imbalanced} using the advanced composition methods from \cite{pmlr-v89-wang19b}. 
Furthermore,  the \textit{post-processing invariance} property \cite{dwork2006our} tells us that the composition of any data-independent mapping with an $(\epsilon,\delta)$-DP algorithm is also $(\epsilon,\delta)$-DP. This ensures that no analysis of the released synthetic data can yield more information about the real data than what our choice of $\epsilon$ and $\delta$ allows.

%%%%%%%%%%%%%%%%%%%%%%%%%%%%%%%%%%%%%%%%%
%%%%%%%   Methods %%%%%%%%%%%%%%%%%%%
%%%%%%%%%%%%%%%%%%%%%%%%%%%%%%%%%%%%%%%%%

% \mpsay{Add the Algorithms}

What comes next describes our proposal for privacy-preserving data generation. We first present the vanilla version of our algorithm called,  DP-MERF (differentially private mean embeddings with random features). 
% Then, we present two variants of this vanilla version for generating input and output pairs as well as for generating heterogenous data. 

\section{Vanilla DP-MERF for unlabeled data}
\label{sec:Methods}

% {DP-rfME for generating input/output0 pairs}\label{sec:Methods_joint}
We first introduce the basic version of our DP-MERF algorithm to learn the distribution of an unlabeled dataset. In this setting, we obtain a data generator by minimizing the random feature representation of MMD, given by  % 
\begin{align}
    \hat{\vtheta} &= \argmin_\vtheta \widetilde{\mathrm{MMD}}_{rf}^{2}(P_\vx, Q_{\tilde{\vx}_\vtheta})
\end{align} where $P_\vx$ denotes the true data distribution. The samples from $Q$ denoted by $\tilde{\vx}$ are drawn from a generative model $\tilde\vx = G_{\vtheta}(\vz)$. The generative model $G_\theta$ is parameterized by $\theta$ and takes a sample $\vz \sim p(\vz)$ from a known, data-independent distribution as input. 
Using the random Fourier features, we arrive at
\begin{align}\label{eq:MMD_rf_vanilla}
\widetilde{\mathrm{MMD}}_{rf}^{2}(P_\vx, Q_{\tilde{\vx}_\vtheta}) = \bigg\|\widetilde{\vmu}_P-\widehat{\vmu}_Q\bigg\|_{2}^{2} 
\end{align} 
where the random feature mean embedding 
% $\hat\vphi(\cdot)$ <-misleading: not the random feature mean embedding
of each distribution is denoted by $\widehat{\vmu}_P = \frac{1}{m}\sum_{i=1}^{m}\hat{\vphi}(\vx_i)$, and $\widehat{\vmu}_Q = \frac{1}{n}\sum_{i=1}^{n}\hat{\vphi}(G_{\vtheta}(\vz_i))$. 

Notice that $\widehat{\vmu}_P$ is the only data-dependent term. Hence, we privatize this term by applying the Gaussian mechanism, defining $\widetilde{\vmu}_P$  by 
\begin{align}\label{eq:noise_up_rf_me}
\widetilde{{\vmu}}_P = \widehat{\vmu}_P  + \Nrm(0, \Delta_{\widehat{\vmu}_P}^2\sigma^2 I)
\end{align}
where the privacy parameter $\sigma$  is chosen as a function of the privacy budget ($\epsilon, \delta$).
The sensitivity of $\widehat{\vmu}_P$ is analytically tractable due to the triangle inequality and the fact that $\|\hat\vphi(\cdot) \|_2 =1$ by construction of the random feature vector given in \eqref{RF}: 
\begin{align}
  \Delta_{\widehat{\vmu}_P} &= \max_{\Dat, \Dat'} \left\| \tfrac{1}{m}\sum_{i=1}^{m}\hat{\vphi}(\vx_i) - \tfrac{1}{m}\sum_{i=1}^{m}\hat{\vphi}(\vx'_i) \right\|_2, \\
  &= \max_{\vx_n, \vx_n'}\left\| \tfrac{1}{m}\hat{\vphi}(\vx_n) - \tfrac{1}{m}\hat{\vphi}(\vx'_n) \right\|_2  \leq \tfrac{2}{m},
\end{align}

Due to the post-processing invariance of DP, we can obtain differentially private generator $G$, since $\widehat{\vmu}_Q$ is data-independent.

\subsection{Bound on the expected absolute error}

If we add noise to the random-feature mean embedding of the data distribution, what is the effect of that noise on the learned generator? Theoretically quantifying this effect is challenging under an arbitrary neural network-based generator. Instead, we theoretically quantify the effect of noise on the objective function.
In particular, given samples $\vx = \{x_i\}_{i=1}^m \sim P$ and $ \tilde{\vx}=\{\tilde{x}_j\}_{j=1}^n \sim Q$, we want to bound the expected absolute error between the noisy random-feature $\mbox{MMD}^2$ (\eqref{MMD_rf_vanilla}) and the original estimator $\mbox{MMD}^2$ (\eqref{MMD_full}).
Given the samples, the error deals with two types of randomness. The first arises due to the random features, $\hat\vphi$. The second arises due to the noise, $\vn$, that we add to the mean-embedding of the data distribution for privacy. 
% Hence, we split the error into two terms where each term focuses on each source of randomness. 
%
The following proposition formally states the bound to the error (See \suppsecref{proof_of_exp_abs_err} for proof). 
\begin{prop}\label{prop:err_bound}
Given samples $\vx = \{x_i\}_{i=1}^m \sim P$ and $ \tilde{\vx}=\{\tilde{x}_j\}_{j=1}^n \sim Q$, the expected absolute error between the noisy random-feature $\mbox{MMD}^2$ given in \eqref{MMD_rf_vanilla} and the $\mbox{MMD}^2$ given in \eqref{MMD_full} is bounded by 
\begin{align}\label{eq:error_bound}
 &\Em_{\vn}\Em_{\hat{\vphi}} \left[\left|\widetilde{\mathrm{MMD}}_{rf}^{2}(\vx, \tilde{\vx}) - \widehat{\mathrm{MMD}}^2(\vx, \tilde\vx) \right|  \right] 
 \end{align}
 \vspace{-0.5cm}
 \begin{align}\label{eq:upper_bound}
 &\leq \left(\frac{4D \sigma^2 }{m^2}+  \frac{8\sqrt{2} \sigma}{m}  \frac{\Gamma\big((D+1)/2\big)}{\Gamma\big(D/2\big)}\right) + 8\sqrt{\frac{2\pi}{D}}.
\end{align}
\end{prop}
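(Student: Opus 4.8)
The plan is to split the error into a \emph{privacy-noise} contribution and a \emph{random-feature approximation} contribution, and to bound each separately. Writing $\widehat{\mathrm{MMD}}_{rf}^2 = \|\widehat{\vmu}_P - \widehat{\vmu}_Q\|_2^2$ for the \emph{noiseless} random-feature estimator and applying the triangle inequality,
\begin{align*}
\bigl|\widetilde{\mathrm{MMD}}_{rf}^2 - \widehat{\mathrm{MMD}}^2\bigr| \leq \bigl|\widetilde{\mathrm{MMD}}_{rf}^2 - \widehat{\mathrm{MMD}}_{rf}^2\bigr| + \bigl|\widehat{\mathrm{MMD}}_{rf}^2 - \widehat{\mathrm{MMD}}^2\bigr|.
\end{align*}
Taking $\Em_{\vn}\Em_{\hat{\vphi}}$ of both sides, the first term isolates the effect of the Gaussian noise $\vn$ added in \eqref{noise_up_rf_me}, while the second isolates the finite-feature error; the two are then handled by independent arguments.

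For the noise term I substitute $\widetilde{\vmu}_P = \widehat{\vmu}_P + \vn$ with $\vn \sim \Nrm(0, \Delta_{\widehat{\vmu}_P}^2 \sigma^2 I)$ and expand the squared norm,
\begin{align*}
\widetilde{\mathrm{MMD}}_{rf}^2 - \widehat{\mathrm{MMD}}_{rf}^2 = 2\,\vn^\top(\widehat{\vmu}_P - \widehat{\vmu}_Q) + \|\vn\|_2^2.
\end{align*}
The quadratic term contributes its mean exactly, $\Em\|\vn\|_2^2 = D\,\Delta_{\widehat{\vmu}_P}^2\sigma^2 = 4D\sigma^2/m^2$, using $\Delta_{\widehat{\vmu}_P}\leq 2/m$. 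For the cross term I invoke Cauchy--Schwarz together with the independence of $\vn$ and $\hat{\vphi}$ to factor $\Em|\vn^\top(\widehat{\vmu}_P - \widehat{\vmu}_Q)| \leq \Em\|\vn\|_2 \cdot \Em\|\widehat{\vmu}_P - \widehat{\vmu}_Q\|_2$, where $\Em\|\vn\|_2 = \Delta_{\widehat{\vmu}_P}\sigma\sqrt{2}\,\Gamma((D{+}1)/2)/\Gamma(D/2)$ is the mean of a scaled chi distribution, and $\|\widehat{\vmu}_P - \widehat{\vmu}_Q\|_2 \leq 2$ since each embedding is an average of unit-norm feature vectors (by \eqref{RF} and the triangle inequality). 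Collecting these yields the first two terms of the stated bound.

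For the random-feature term I expand both $\widehat{\mathrm{MMD}}_{rf}^2$ and $\widehat{\mathrm{MMD}}^2$ into their pairwise sums and subtract term by term, so that the difference is controlled by the per-pair errors $\hat{\vphi}(x)^\top\hat{\vphi}(x') - k(x,x')$. The two facts I would use are that random Fourier features are unbiased, $\Em_{\hat{\vphi}}[\hat{\vphi}(x)^\top\hat{\vphi}(x')] = k(x,x')$, and that $\hat{\vphi}(x)^\top\hat{\vphi}(x') = \tfrac{2}{D}\sum_{j=1}^{D/2}\cos(\omega_j^\top(x-x'))$ is an average of $D/2$ i.i.d.\ bounded terms. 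A Hoeffding-type tail bound then gives $\Pr(|\cdot| > t) \leq 2\exp(-cDt^2)$, and integrating $\Em|\cdot| = \int_0^\infty \Pr(|\cdot| > t)\,dt$ produces a per-pair bound of order $\sqrt{1/D}$, the Gaussian integral $\int_0^\infty e^{-cDt^2}\,dt$ being the source of the $\sqrt{2\pi}$ factor. Applying the triangle inequality across the three pairwise sums, whose normalizations $1/m^2$, $1/n^2$ and $2/mn$ combine to a total coefficient of $4$, delivers the final $8\sqrt{2\pi/D}$.

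The main obstacle is the random-feature term. The noise contribution is an exact moment computation once independence is used, but the approximation term requires (i) decoupling the per-pair concentration from the statistical dependence across the $O(m^2 + n^2 + mn)$ pairs, which the triangle inequality resolves at the cost of a constant factor, and (ii) carrying the concentration constant cleanly through the tail integral so that the stated $\sqrt{2\pi}$ prefactor emerges. A secondary point I would double-check is that the privacy noise $\vn$ is genuinely independent of the random features $\hat{\vphi}$, which justifies factoring the expectation in the cross term.
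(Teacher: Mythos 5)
Your proposal is correct and follows essentially the same route as the paper's proof: the identical triangle-inequality split into a privacy-noise term and a feature-approximation term, the same exact-moment treatment of the noise term ($\Em\|\vn\|_2^2$ as a chi-square mean, Cauchy--Schwarz with the chi-distribution mean $\Em\|\vn\|_2$ and the bound $\|\widehat{\vmu}_P-\widehat{\vmu}_Q\|_2\le 2$), and a concentration-plus-tail-integration argument for the random-feature term. The only cosmetic difference is that you apply Hoeffding per pair and sum over the three pairwise sums, whereas the paper (following Sutherland and Schneider) applies McDiarmid to each mean-map kernel as a function of the frequencies; both yield the stated $O(\sqrt{1/D})$ bound, yours with a marginally tighter constant.
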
 where $\Gamma$ is the Gamma function, $\sigma$ is the noise scale (inversely proportional to $\epsilon$), $m$ is the number of training datapoints, and $D$ is the number of features.
\begin{remark}
To prove \propref{err_bound}, we split \eqref{error_bound} into two terms using the triangle inequality.
The first term involves the expected absolute error between the \textbf{noisy} random feature $\mbox{MMD}^2$ (\eqref{MMD_rf_vanilla}) and random feature $\mbox{MMD}^2$ (\eqref{MMD_rf}), which yields the first term (inside a big parenthesis) in \eqref{upper_bound}.
The second term involves the expected absolute error between random feature $\mbox{MMD}^2$ (\eqref{MMD_rf}) and the $\mbox{MMD}^2$ (\eqref{MMD_full}), which yields the second term in \eqref{upper_bound}.
The upper bound is intuitive in that as the number of random features increases, the second term decreases because the random feature MMD is getting closer to MMD, while the first term increases because we add noise to a larger number of random features. 
\end{remark}
\begin{remark}
This bound provides a guideline on how to choose $D$ given a desired privacy level $\epsilon$ and the dataset size $m$. First, given $m$, as long as we choose $D$ such that $m>\sqrt{D}$, the error remains relatively small. 
However, small $D$  can increase the error in the second term (arising from the MMD approximation using random features). Hence, there is a trade-off between these two terms. In our experiments, the datasets we consider have a relatively large $m$ (see \tabref{data_description}), and so choosing a large $D$ ($D \approx 10,000$) incurred a relatively small error for a small value of $\epsilon$.   
\end{remark}

% \section{DP-MERF for input/output pairs} \label{sec:dpmerf_io}
\section{Extension of the vanilla DP-MERF} \label{sec:dpmerf_plus}

After introducing the core functionality of DP-MERF, we extend the vanilla method to cases for 1) labeled data, 2) class-imbalanced data, and 3) heterogeneous data.

\subsection{DP-MERF for labeled data}\label{sec:dpmerf_labeled}

We begin by extending our method to balanced labeled datasets with input features $\vx$ and output labels $\vy$.
In this case, the generator is conditioned on the label: $G_\theta(\vz, \vy) \mapsto \tilde{\vx}$, where $\vy$ is drawn from the uniform distribution over classes.

We encode the class information in the MMD objective, by constructing a kernel from a product of two existing kernels, $k((\vx,\vy), (\vx', \vy')) = k_\vx(\vx, \vx') k_\vy (\vy, \vy')$, where $k_\vx$ is a kernel for input features and $k_\vy$ is a kernel for output labels. 
% For regression, we can use the Gaussian kernel for both $k_\vx$ and $k_\vy$.  
% For classification, 
We choose the Gaussian kernel\footnote{The optimal choice of kernel requires knowledge on the characteristics of the data (see guidelines in Ch. 4 in \cite{williams2006gaussian}). At small data sample sizes, a bad kernel choice will affect the efficiency of the algorithm and can underestimate MMD if the chosen kernel assigns small weights to the “correct” frequencies at which the distributions differ. However, with a large enough sample, any characteristic kernel is able to capture such differences.}  for $k_\vx$ and the polynomial kernel with order-1, $k_\vy(\vy, \vy') = \vy\trp\vy'+c$ for one-hot-encoded labels $\vy$ and set $c=0$.
In this case, the resulting kernel is also characteristic, forming the corresponding MMD as a metric, as explained in \cite{JMLR:v18:17-492}.
We represent the mean embeddings using random features by 
\begin{align}\label{eq:rfME_joint}
  \widehat{\vmu}_{P_{\vx, \vy}} & =\tfrac{1}{m}\sum_{i=1}^{m} \hat{\vf}(\vx_i, \vy_i), \mbox{for true data}\\
  \widehat{\vmu}_{Q_{\vx, \vy}}&= \tfrac{1}{n}\sum_{i=1}^{n} 
%   \hat{\vf}(G_{\vtheta}(\vz_{\vx_i}, \vz_{\vy_i})), \mbox{ for synthetic data} \nonumber 
    \hat{\vf}(G_{\vtheta}(\vz_i, \vy_i), \vy_i), \mbox{ for synthetic data} \nonumber 
\end{align} 
where we define 
% \begin{align}
$
\hat{\vf}(\vx_i, \vy_i) := \hat{\vphi}(\vx_i) \vf(\vy_i)\trp, 
$
% \end{align}
where $\vf(\vy_i) = \vy_i$ for the order-1 polynomial kernel and $\vy_i$ is one-hot-encoded. 
%
% This definition $\hat{\vf}(\vx_i, \vy_i)$ is due to that $k(a, b) = \langle \phi(a), \phi(b)\rangle_{\mathcal{H}}$ for a feature map $\phi$ associated with a Hilbert space $\mathcal{H}$. 
%
See \suppsecref{kernel_product_featmap} for derivation. 
%
% the resulting mean embedding is defined by $\hat\vf = \hat{\vf}_\vx(\vx) \vf_\vy(\vy)\trp$. 
With $D$ random features and $C$ classes, the random feature mean embedding in \eqref{rfME_joint} can also be written as 
% \begin{align} %\label{eq:rfME_joint_P}
$
\widehat{\vmu}_{P_{\vx, \vy}} =
\begin{bmatrix}
% | & | & | \\
\vu_1, & \cdots, & \vu_C  \\
% | & | & |
\end{bmatrix} \in \mathbb{R}^{D \times C} \nonumber
$
% \end{align}
where $c$'th column is given by 
\begin{align}\label{eq:u_c}
\vu_c = \frac{1}{m}\sum_{\vx_i \in X^{(c)}_m}\hat{\vphi}(\vx_i)
\end{align} 
where $X^{(c)}_m$ is the set of the datapoints that belong to the class $c$. 
% Recall that $D$ is the number of random features. $C$ is the number of classes in the dataset. 
As in the unlabeled case, $\widehat{\vmu}_{P_{\vx, \vy}}$ has sensitivity $\Delta_{\vmu_P} = \frac{2}{m}$ and is released with the Gaussian mechanism:
\begin{align}\label{eq:privatize_mu}
  \widetilde{\vmu}_{P_{\vx, \vy}} &= \widehat{\vmu}_{P_{\vx, \vy}} + \Nrm(0,  \Delta_{\vmu_P}^2 \sigma^2 \mI_D)
\end{align} 
With the released mean embedding $\widetilde{\vmu}_{P_{\vx, \vy}}$, we construct the private joint maximum mean discrepancy objective:
\begin{align}\label{eq:MMD_rf_joint}
\widetilde{\mathrm{MMD}}_{rf}^{2}(P_{\vx, \vy}, Q_{\tilde\vx_\vtheta, \tilde\vy_\vtheta}) = \bigg\|\widetilde{\vmu}_{P_{\vx, \vy}}-\widehat{\vmu}_{Q_{\vx, \vy}}\bigg\|_{F}^{2},
\end{align}
where $F$ denotes the Frobenius norm.  This kind of objective has been used in the non-private setting \cite{Zhang2019, ijcai2018-293}.

% \begin{defn}[$\mathcal{M}_{\vu_c}$]\label{defn:DP_mu}
% The mechanism takes a dataset $\Dat$ and computes \eqref{mc}. It outputs the privatized quantity given a privacy parameter $\sigma$, the number of random features $D$ and the sensitivity $\Delta_{\vu_c}$, 
% \end{defn}  
% As the norm of $\hat\vphi$ is bounded by 1, the sensitivity of $\vu_c$ (\eqref{mc}) is $\Delta_{\vu_c} = \frac{2}{m}$. 

% Suppose a generator $G_\vtheta$ (parameterized by $\vtheta$) takes a pair of inputs $\vz_\vx, \vz_\vy$ drawn from a known distribution and outputs a pair of samples denoted by $\tilde{\vx}_\vtheta,\tilde{\vy}_\vtheta$ : $G_\vtheta(\vz_\vx, \vz_\vy) \mapsto (\tilde{\vx}_\vtheta,\tilde{\vy}_\vtheta)$.
% We consider the following objective function, 
%

% We compose $P_{\vx, \vy} = P_{\vx|\vy}P_{\vy}$, and the generator accordingly: $G = G_1 \circ G_2$, where $G_2 (\vz_\vy) \mapsto\tilde{\vy}$ and $G_1(\vz_\vx|\tilde{\vy}) \mapsto \tilde{\vx}, \tilde{\vy}$. 
%
% The choice of kernel here is important, as now we want to compute the distance in terms of two types of inputs. 
%
% \textbf{Outer product kernels.}

\subsection{DP-MERF for imbalanced data} \label{sec:dpmerf_imbalanced}

Building on the previous section,
%
% Here, we present DP-MERF for learning the joint distribution over the input features $\vx$ and output labels $\vy$. 
% % (either categorical variables in classification, or numerical variables in regression). 
%
% The benefit of learning the joint distribution is that we do not need to assume the information on the output labels to be public. By learning the joint distribution, we keep the ratio of the datapoints across different classes the same in the generated dataset as in the real dataset. This way our generated dataset is truthful to the privacy-sensitive original dataset in terms of  the distribution over both input features and output labels. 
%
% We construct the mean embeddings as done in \secref{sec:dpmerf_labeled}
%
notice that in \eqref{u_c} the sum in each column is over $m_c$, the number of instances that belong to the particular class $c$, while the divisor is the number of samples in the entire dataset, $m$. This causes difficulties in learning 
%with this objective 
when classes are highly imbalanced, as for rare classes $m$ can be significantly larger than the sum of the corresponding column. 
In order to address this problem, we release the vector of class counts, $\vm = [m_1, \cdots, m_C]$ using the Gaussian mechanism: 
\begin{align} \label{eq:privatize_class_counts}
\widetilde{\vm} = \vm +  \Nrm(0,  \Delta_{\vm}^2 \sigma^2 \mI_C)    
\end{align}
As changing a datapoint affects at most two class counts, $\Delta_{\vm} = \sqrt{2}$.
We then modify the released mean embedding by appropriately weighting the embedding for each class: 
%
%
% where weights are denoted by $\{w_1, \cdots, w_C\}$
% \begin{align}
\begin{align} \label{eq:weighted_mean_embedding}
\widetilde{\vmu}^*_{P_{\vx, \vy}}  = 
\begin{bmatrix}
% \frac{1}{\widetilde{\omega}_1}
\frac{m}{\widetilde{m}_1}
\widetilde{\vu}_1, & \cdots, & 
% \frac{1}{\widetilde{\omega}_C} 
\frac{m}{\widetilde{m}_C}
\widetilde{\vu}_C
\end{bmatrix}
\end{align}
Note that we arrive at this expression of mean embedding if we change the kernel on the labels to a weighted one, i.e., $k_\vy(\vy,\vy') = \sum_{c=1}^C\frac{m}{\widetilde{m}_c} \vy_c\trp\vy_c'$. 
% \label{eq:rfME_weighted}
% \end{align} 
% where the class weights $\widetilde{\omega}_c = \frac{\widetilde{m}_c}{m}$ 
% $\widetilde{\vm} &= [\widetilde{m}_1, \cdots, \widetilde{m}_C]$ is are the . 
%
% $\widetilde{\vomega} &= [\widetilde{\omega}_1, \cdots, \widetilde{\omega}_C]$ is defined by . 
% As the number of samples per class $m_c$ depends on the data, we release it using the Gaussian mechanism: 
% $\tilde\vomega &= \vm +  \Nrm(0,  \Delta_{m}^2 \sigma^2 \mI_C)$
In the re-weighted mean embedding each class-wise embedding $\frac{m}{\widetilde{m}_c}
\widetilde{\vu}_c$ has a similar norm, and equally contributes to the objective loss.
This ensures that infrequent classes are also modelled accurately.

% Here we privatize the weights $\vomega$ using the Gaussian mechanism: 
%and each column $\vu_c$ separately, using the two mechanisms defined below.
% \textit{$\mathcal{M}_{weights}$: Privatizing the weights.}
% First, privatizing weight vector is analogous to privatizing the mixing coefficients in \cite{pmlr-v54-park17c}. If there is one datapoint's difference in the neighbouring two datasets, only two elements can differ in the weight vector, resulting in the sensitivity of $\Delta_{\vomega} = \frac{\sqrt{2}}{m}$.
% We define the weight privatization mechanism: 
% \begin{defn}[$\mathcal{M}_{weights}$]\label{defn:DP_weights}
% The mechanism takes a dataset $\Dat$ and computes \eqref{weights}. It outputs the privatized weights given a privacy parameter $\sigma$, the number of classes $C$ and the sensitivity $\Delta_{\vomega}$, 
% \begin{align}\label{eq:privatize_weights}
%     \tilde\vomega &= \vomega +  \Nrm(0,  \Delta_{\vomega}^2 \sigma^2 \mI_C),
% \end{align}  % where $C$ is the number of classes. 
% \end{defn}
% The sensitivity of $\Delta_{\vomega} = \frac{\sqrt{2}}{m}$ is bounded by the fact that changing a datapoint changes at most two counts $m_i, m_j$ by $\frac{1}{m}$ each and thus 
% This is analogous to privatizing the mixing coefficients in \cite{pmlr-v54-park17c}. 
%If there is one datapoint's difference in the neighbouring two datasets, only two elements can differ in the weight vector, resulting in %The sensitivity of $\Delta_{\vomega} = \frac{\sqrt{2}}{m}$.
The total privacy loss results from the composition of the two releases of first $\widetilde{\vm}$ and then $\widetilde{\vmu}_{P_{\vx, vy}}$. 
During training, we sample the generated labels $\widetilde{\vy}$ proportional to the class sizes in $\widetilde{\vm}$. The procedure is summarized in \algoref{rf_ME_joint}.

\begin{algorithm}[!t]
\caption{DP-MERF for imbalanced data}\label{algo:rf_ME_joint}
\begin{algorithmic}
\vspace{0.1cm}
\REQUIRE Dataset $\Dat$, and a privacy level $(\epsilon, \delta)$
\vspace{0.1cm}
\ENSURE $(\epsilon, \delta)$-DP input output samples for all classes\\
\STATE \textbf{Step 1}. Given $(\epsilon, \delta)$, compute the privacy parameter $\sigma$ by the RDP composition in \cite{pmlr-v89-wang19b} for the two uses of the Gaussian mechanism in steps 2 and 3. 
\STATE  \textbf{Step 2}. Release the mean embedding $\widetilde{\vmu}_{P_{\vx, \vy}}$ via \eqref{privatize_mu}
\STATE  \textbf{Step 3}. Release the class counts $\widetilde{\vm}$ using \eqref{privatize_class_counts}.
\STATE  \textbf{Step 4}. Create the weighted mean embedding $\widetilde{\vmu}^*_{P_{\vx, \vy}}$ using \eqref{weighted_mean_embedding}
\STATE  \textbf{Step 5}. Train the generator by minimizing $\widetilde{\mathrm{MMD}}_{rf}^{2}(P_{\vx, \vy}, Q_{\tilde\vx_\vtheta, \tilde\vy_\vtheta}) = \bigg\|\widetilde{\vmu}^*_{P_{\vx, \vy}}-\widehat{\vmu}_{Q_{\vx, \vy}}\bigg\|_{F}^{2}$
\end{algorithmic}
\end{algorithm}

%%%%%%%%%%%%%%%%%%%%%%%%
%%%%%%%%%%%%%%%%%%%%%%%%

\subsection{DP-MERF for heterogeneous data}\label{sec:Methods_hetero}

To handle heterogeneous data consisting of numerical variables denoted by $\vx_{num}$ and categorical variables denoted by $\vx_{cat}$, we consider the sum of two existing kernels, $k((\vx_{num}, \vx_{cat}), (\vx'_{num}, \vx'_{cat})) = k_{num}(\vx_{num}, \vx'_{num}) + k_{cat} (\vx_{cat}, \vx'_{cat})$, where $k_{num}$ is a kernel for numerical variables and $k_{cat}$ is a kernel for categorical variables. 
Note that this construction of sum of two kernels does not mean that we implicitly assume independence of the two types of variables, for details see \suppsecref{het_var_dependence}.

As before, we could use the Gaussian kernel for $k_{num}(\vx_{num}, \vx'_{num})=\hat{\vphi}(\vx_{num})\trp \hat{\vphi}(\vx'_{num})$ and a normalized polynomial kernel with order-1,  $k_{cat} (\vx_{cat}, \vx'_{cat}) = \frac{1}{d_{cat}} \vx_{cat}\trp\vx_{cat}'$ for one-hot-encoded values $\vx_{cat}$ and the length of $\vx_{cat}$ being $d_{cat}$. This normalization is to match the importance of the two kernels in the resulting mean embeddings. 
Under these kernels, we define
% can approximate the mean embeddings using random features 
\begin{align}\label{eq:rfME_hetero}
  \widehat{\vmu}_{P_{\vx}} & =\tfrac{1}{m}\sum_{i=1}^{m} \hat{\vh}(\vx_{num}^{(i)}, \vx_{cat}^{(i)}),
%   \widehat{\vmu}_{Q_{\vx, \vy}}&= \frac{1}{n}\sum_{i=1}^{n} \hat{\vh}(G_{\vtheta}(\vz_\vx_i, \vz_\vy_i)) 
\end{align}
%
% The MMD can be approximated by using random features as 
 \begin{figure*}[htb]
    \centering
    % \includestandalone[mode=buildnew, scale=1.]{figs/syn2d_plot.tex}
    \includegraphics[scale=1.]{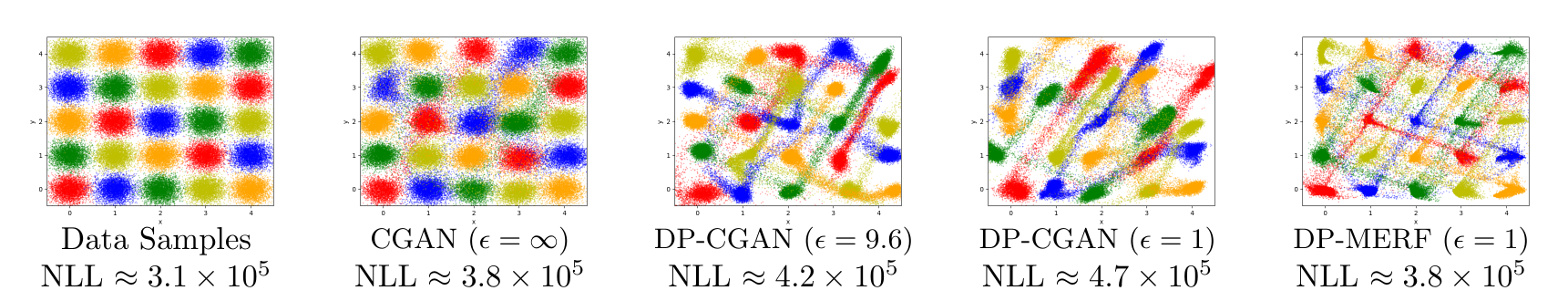}
    \caption{Simulated example from a Gaussian mixture.  \textbf{Left}:  Data samples drawn from a Gaussian Mixture distribution with 5 classes (each color represents a class). NLL denotes the negative log likelihood of the samples given the true data distribution. 
    \textbf{Middle three}:
    Synthetic data generated by DP-CGANs at different privacy levels. CGAN ($\epsilon=\infty$) performs nearly perfectly. However, at $\epsilon=1$, some modes are dropped, which is reflected in NLL. 
    \textbf{Right}: Synthetic data samples generated by DP-MERF at $\epsilon=1$. 
    Our method captures all modes accurately at $\epsilon=1$, which is also reflected in NLL.
    }
    \label{fig:syn2d_samples}
    \vspace{0.4cm}
\end{figure*}

% \begin{align}\label{eq:MMD_RF_sum}
%   MMD_{rF}(P_{\vx,\vy}, G_{\vphi}(\vz_\vx, \vz_\vy)) &= \left\|\frac{1}{n_x}\sum_{i=1}^{n_x} \hat{\vf}(\vx_i, \vy_i) - \frac{1}{n_z}\sum_{i=1}^{n_z} \hat{\vf}(G_{\vphi}(\vz_\vx_i, \vz_\vy_i)) \right\|_2
% \end{align} 
%
where we define $\hat{\vh}(\vx_{num}^{(i)}, \vx_{cat}^{(i)}):=\begin{bmatrix} 
    \hat{\vphi}(\vx_{num}^{(i)})  \\
     \tfrac{1}{\sqrt{d_{cat}}} \vx_{cat}^{(i)}
    \end{bmatrix}$ 
    % is the concatenation of the vector $ \hat{\vphi}(\vx_i)$ and $\vh(\vy_i)$, where $\vh(\vy) = \left(\frac{1}{d_\vy}\right)^{\frac{1}{2}} \vy$, when $\vy$ is one-hot-encoded. As before, we arrive at this expression using the definition of kernel: 
    based on the definition of kernel $k$ (See \suppsecref{kernel_sum_featmap} for derivation). 

In summary, for generating heterogeneous data, we run \algoref{rf_ME_joint} with three changes: 
\begin{enumerate}
    \vspace{-0.3cm}
    \item Redefine $\hat{\vf}(\vx, \vy)$ in \eqref{rfME_joint} as $\hat{\vh}(\vx_{num}, \vx_{cat}) \vf(\vy)\trp$.
    \vspace{-0.3cm}
    \item Redefine  $\vu_c$ in \eqref{u_c} as $ \frac{1}{m}\sum_{i \in X^{(c)}_m}\hat{\vh}(\vx_i)$.
    \vspace{-0.3cm}
    \item Change the sensitivity of $\vu_c$ to $\Delta_{\vu_c} = \frac{2\sqrt{2}}{m}$ (see  \suppsecref{sensitivity_mu_c_heterogeneous} for proof). 
\end{enumerate}

%%%%%%%%%%%%%%%%%%%%%%%%%%%%%%%%%%%%%%%%%
%%%%%%%   Related work %%%%%%%%%%%%%%%%%%%
%%%%%%%%%%%%%%%%%%%%%%%%%%%%%%%%%%%%%%%%%

\section{Related work}
\label{sec:related_work}

% \fh{I think spending a bit more space on DP data release papers might be good. We can point out how they are different, so we don't get as many questions asking us about paper X or Y? I'll add some candidates below, feel free to add:}
% 
% There are two categories of prior work that are relevant to ours. The first category is the differentially private GAN framework and its variants

\paragraph{Differentially private data release.}
% 0) -- there's a lot of DPDR literature out there
The field of DP data release contains several distinct lines of research.
% 1) -- some methods are specific to certain queries
As mentioned previously, approaches from a learning theory perspective
\cite{Mohammed:2011:DPD:2020408.2020487, 10.1007/978-3-642-15546-8_11, NIPS2012_4548, 7911185} provide bounds on the utility of the data, but contain either strong assumptions about the types of executed queries or intractable computation, which makes this line of research less relevant to our approach.

% 2) -- many models focus on dicrete data
Among the query-independent methods,
a large body of work on DP data release focuses on discrete or possible to discretize data. This is a relevant sub-problem in which good results can be achieved by releasing carefully selected marginals of feature subsets, as each feature only takes on a finite set of values.
Such approaches \cite{privbayes, priview, chen2015dpdatapublication} have been, for instance, been dominant among the winning entries of the NIST 2018 Differential Privacy Synthetic Data Challenge \cite{nist2018}, which focused on the task of releasing discrete datasets, utilizing related publicly available data. Although we do not compare to this line of work in the main text, as our method deals with the general setting of DP data release, including continuous data,
we show the comparison to \cite{privbayes} in the \suppsecref{comparison_others}.

% 3) -- we compare mostly to recent GAN-based approaches, because they target the same general setting
The recent line of research into GAN-based private data release
% DP Wasserstein GAN, DP conditional GAN, and PATE-GAN. 
\cite{DPGAN, DP_CGAN, DBLP:conf/sec/FrigerioOGD19, PATE_GAN, gs-wgan} addresses the same general setting and so we select these models for comparison. 
GANs are regarded as a promising model for this task because of their great success in non-private generative modelling and thanks to the fact the generator network of a GAN can be trained without direct access to the data. The GAN discriminator must still be trained with privacy constraints. In most cases, this is achieved through gradient perturbation using DP-SGD, with the exception of PATE-GAN \cite{PATE_GAN}, which is based on the Private Aggregation of Teacher Ensembles (PATE) \cite{papernot:private-training}. 
%Unlike these methods, our method does not involve the difficult task of finding the equilibrium between the generator and the discriminator. 
DP-GAN \cite{DPGAN} and PATE-GAN \cite{PATE_GAN} generate unlabeled data and thus must train one model per class to obtain a labeled dataset. 
% Our method is not limited to the binary classification problems as in PATE-GAN \cite{PATE_GAN}; nor requires a complicated sensitivity computation as in DP-GAN \cite{DPGAN}. Furthermore, our method can produce input and output pairs jointly for supervised learning problems. %
DP-CGAN \cite{DP_CGAN} and GS-WGAN \cite{gs-wgan} generate the input features conditioning on the labels, while they do not learn the distribution over the labels. GS-WGAN improves on the basic DP-SGD by alleviating the need for gradient clipping by adapting the loss function and, like PATE-GAN, employs multiple discriminator networks trained on distinct parts of the dataset to amplify privacy by subsampling. We compare these methods with our approach in \secref{experiments}.
% The only method we are aware of aiming at generating data for multi-class supervised learning is DP-CGAN, against which we will compare our method in \secref{experiments}.

%The second category is the differentially private auto-encoder framework \cite{Abay2019, tantipongpipat2019differentially, DBLP:journals/corr/abs-1812-02274}, which reduces the dimensionality of the high-dimensional data into a low-dimensional code space via an auto-encoder training and learns a generator which produces codes. Our method for image data also uses an auto-encoder for dimensionality reduction. However, unlike these methods, we train the generator using the mean embeddings with random features.  

% (2) Kernel methods with DP data generation
\paragraph{Random feature kernel methods with differential privacy.}
Some prior work has employed random feature mean embeddings in the context of differential privacy, but not for the purpose of generative modeling. \cite{BalTolSch18} proposed to use the 
% \mpsay{Ilya's work}
reduced set method in conjunction with random features for sharing DP mean embeddings.
This method performs poorly as the dimension of data grows, which is also noted by the authors (see \suppsecref{comparison_others} for  comparison to our method). 
%, but generative models are not part of their algorithms.  
\cite{sarpatwar2019differentially} also used the random feature representations of mean embeddings for the DP distributed data summarization to take into account covariate shifts.

%%%%%%%%%%%%%%%%%%%%%%%%%%%%%%%%%%%%%%%%%
%%%%%%%   Experiments %%%%%%%%%%%%%%%%%%%
%%%%%%%%%%%%%%%%%%%%%%%%%%%%%%%%%%%%%%%%%
% \newpage

%%%%%%%%%%%%%%%%%%%%%%%%%%

\begin{table*}[htb]
\caption{Performance comparison on tabular datasets, averaged over five runs. DP-MERF achieves the best scores among private models (bold) on the majority of datasets.}
\vskip 0.1in
\centering
\scalebox{0.9}{
\begin{tabular}{l *{5}{|cc} }
\toprule
& \multicolumn{2}{ c| }{Real} & \multicolumn{2}{ c| }{DP-CGAN}          & \multicolumn{2}{ c| }{DP-GAN}          & \multicolumn{2}{ c| }{\textbf{DP-MERF}} & \multicolumn{2}{ c }{DP-MERF}   \\
& \multicolumn{2}{ c| }{}     & \multicolumn{2}{ c| }{($1,10^{-5}$)-DP} & \multicolumn{2}{ c| }{($1,10^{-5}$)-DP} & \multicolumn{2}{ c| }{($1,10^{-5}$)-DP} & \multicolumn{2}{ c }{non-DP}\\
\midrule
                  & ROC   &  PRC   & ROC    &  PRC  &  ROC   &  PRC   &  ROC   &  PRC   & ROC    &  PRC
\\
\textbf{adult}    & 0.730 & 0.639  &  0.509 & 0.444 &  0.511 & 0.445  &  \textbf{0.650} & \textbf{0.564} &  0.653 & 0.570  \\
\textbf{census}   & 0.747 & 0.415  &  0.655 & 0.216 &  0.529 & 0.166  &  \textbf{0.686} & \textbf{0.358} &  0.692 & 0.369  \\
\textbf{cervical} & 0.786 & 0.493  &  0.519 &\textbf{ 0.200} &  0.485 & 0.183  &  \textbf{0.545} & 0.184 &  0.896 & 0.737  \\
\textbf{credit}   & 0.923 & 0.874  &  0.664 & 0.356 &  0.435 & 0.150  &  \textbf{0.772} & \textbf{0.637} &  0.898 & 0.774  \\
\textbf{epileptic}& 0.797 & 0.617  &  0.578 & 0.241 &  0.505 & 0.196  &  \textbf{0.611} & \textbf{0.340} &  0.616 & 0.335  \\
\textbf{isolet}   & 0.893 & 0.728  &  0.511 & 0.198 &  0.540 & 0.205  &  \textbf{0.547} & \textbf{0.404} &  0.733 & 0.424  \\
\midrule
& \multicolumn{2}{ c| }{F1} & \multicolumn{2}{ c| }{F1} & \multicolumn{2}{ c| }{F1} & \multicolumn{2}{ c| }{F1} & \multicolumn{2}{ c }{F1}\\
\textbf{covtype} & \multicolumn{2}{ c| }{0.643}  & \multicolumn{2}{ c| }{0.285} & \multicolumn{2}{ c| }{\textbf{0.492}} & \multicolumn{2}{ c| }{0.467} & \multicolumn{2}{ c }{0.513} \\
\textbf{intrusion}& \multicolumn{2}{ c| }{0.959} & \multicolumn{2}{ c| }{0.302} & \multicolumn{2}{ c| }{0.251} & \multicolumn{2}{ c| }{\textbf{0.850}}  & \multicolumn{2}{ c }{0.856}\\
\bottomrule
\end{tabular}}\label{tab:summary_all_tabular}
  \vspace{-0.2cm}
\end{table*}

\section{Experiments}\label{sec:experiments}

% \mj{Can we first present the simulated data? Then move to real data? So, we can emphasize that for this simulated data, we know the ground truth model, which allows us to use NLL as a metric for judging whether generated samples are similar to the true data. Then, we say for real data, we don't know the ground truth model, so here are the metrics we adopt, which are also frequently used in other DP data generation papers (and cite them one more time here).} (fh: done. removing for now)

In this section, we show the robustness of our method on a diverse range of data under strong privacy constraints.
On each dataset, we train DP-MERF and comparison methods to obtain a set of private \textit{synthetic} data samples and compare, how well these emulate the original dataset.
Due to the space limit, we describe all our experimental details (e.g., architecture choices for generators, chosen number of random features, etc.) in the supplementary material. Our code is available at \url{https://github.com/ParkLabML/DP-MERF}. 

\paragraph{2D Gaussian mixtures.}
We begin our experiments on a simple synthetic distribution of Gaussian mixtures which is aligned on a 5 by 5 grid and assigned to 5 classes as shown in \figref{syn2d_samples} (left). The dataset is generated by taking 4000 samples from each Gaussian, reserving 10\% for the test set,
which yields 90000 training samples from the following distribution:
\begin{align}
p(\vx, \vy) = \prod_i^N \sum_{j \in C_{\vy_i}} \frac{1}{C} \mathcal{N}(\vx_i|\vmu_j, \sigma \mI_2)
\end{align}
where $N=90000$, and $\sigma=0.2$. $C=25$ is the number of clusters and $C_y$ denotes the set of indices for means $\vmu$ assigned to class $y$. Five Gaussians are assigned to each class, which leads to a uniform distribution over $\vy$ and $18000$ samples per class.

We choose this dataset because knowing the true data distribution allows us to compute the negative log likelihood (NLL) 
of the samples under the true distribution as a measure of the generated samples' quality:
$\text{NLL}(\vx, \vy) =-\log p(\vx, \vy)$. 
Note that this is different from the other common measure of computing the negative log-likelihood of the true data given the learned model parameters. 
% \begin{align}
%     \text{NLL}(\vx, \vy) =- \sum_i^N \log \sum_{j \in C_{\vy_i}} \frac{1}{C} \mathcal{N}(\vx_i|\vmu_j, \sigma \mI_2)
% \end{align}

A high NLL score indicates that many samples lie in low density regions of the data distribution. In cases where models tend to under-fit the data, a lower NLL score can thus be regarded as better. However, a low score does not imply that all modes are covered and may also be the result of low sample variance, although the out-of-distribution samples dominate the score, due to the non-linearity of the $\log$ function.

At different levels of privacy, we train DP-CGAN on this dataset and select the models with the fewest dropped modes and secondarily the lowest NLL. We compare this to a DP-MERF model for balanced datasets in \figref{syn2d_samples}.
While DP-CGAN in the non-private setting $(\epsilon=\infty)$ fits the data well, more samples fall out of the distribution as privacy is increased and some modes (like the green one in the top right corner) are dropped. DP-MERF on the other hand preserves all modes and places few samples in low density regions as indicated by the low NLL score. This NLL score is particularly low and on par with the non-private DP-CGAN model, despite a slightly worse fit, because DP-MERF seems to underestimate variance.
% The experiments present robustness of the method in producing a diverse range of data both in private and non-private settings.
%
% \begin{wraptable}{r}{0.5\textwidth}
\begin{table}[!t]
\caption{Tabular datasets. num refers to numerical, cat refers to categorical, and ord refers to ordinal variables}
\label{tab:data_description}
\vskip 0.1in
\centering
\scalebox{0.85}{
\begin{tabular}{l r c c}
\toprule
% \tabhead{dataset} & \tabhead{$\#$ samps}  & \tabhead{$\#$ classes} & \tabhead{ $\#$ features}  \\
dataset & $\#$ samps  & $\#$ classes & $\#$ features  \\
\midrule
isolet & 4366 & 2 & 617 num \\
covtype & 406698 &  7 & 10 num, 44 cat \\
epileptic & 11500 & 2 & 178 num \\
credit & 284807 & 2 & 29 num \\
cervical & 753 & 2 & 11 num, 24 cat \\
census & 199523 & 2 & 7 num, 33 cat\\
adult & 22561 & 2 & 6 num, 8 cat\\
intrusion & 394021 & 5 & 8 cat, 6 ord, 26 num\\
\bottomrule\\
\end{tabular}

}
  \vspace{-0.5cm}
\end{table}
% \end{wraptable}

\paragraph{Real world data evaluation.}

% We first train a generator using either DP-MERF or a comparison method, and obtain \textit{synthetic} data samples, 
In the following experiments we do not know the true data distribution and thus require a different method to evaluate the quality of privately generated datasets. Following the common approach used in \cite{PATE_GAN, DP_CGAN, gs-wgan}, we use the private datasets to train a selection of $12$ \textit{predictive models} (see \tabref{credit_best} in the Supplementary for the models). We then evaluate these trained models on a test set of \textit{real} data, which indicates how well the models generalize from the synthetic to the real data distribution and thus how useful the private data would be if used in place of the real data.
% using the synthetic data to predict the labels of \textit{real} test data.
% This evaluation follows \cite{PATE_GAN}, but 
Note that hyper-parameters of the 12 models differ because the exact settings used in \cite{PATE_GAN} were not available to us, which means that their scores are not directly comparable to ours.
As comparison models, we test DP-CGAN \cite{DP_CGAN}, as well as our own implementation of an ensemble of 10 DP-GANs, where each model generates data for each class. Our version of DP-GAN differs from \cite{DPGAN} in that it uses standard DP-SGD \cite{DP_SGD} with gradient clipping rather than weight clipping. We further include GS-WGAN \cite{gs-wgan} on image datasets following their original setup.
Note that our DP-GAN implementation and GS-WGAN use the analytical moments accountant \cite{wang2019subsampled} via the autodp package. DP-CGAN uses the RDP accountant \cite{mironov2017renyi} from the tensorflow-privacy package, which is slightly older but still comparable.
% , which we have  tested on tabular data due to its newness at the time of submission.
The results in \cite{PATE_GAN, DPGAN} could not be reproduced as the released code was incomplete.
%\footnote{We were hoping to compare our model with the original DP-GAN and PATE-GAN code, but found the available implementations to be incomplete and with errors. We asked the authors for a functioning implementation but did not receive one. Our version of DP-GAN differs from the published version in that it uses standard DP-SGD \cite{DP_SGD} with gradient clipping rather than weight clipping because their privacy analysis was not reproducible, as \cite{schwabedal2020differentially} point out as well.}

As comparison metrics, we use ROC (area under the receiver operating characteristics curve) and PRC (area under the precision recall curve) for binary-labeled data. For multiclass-labeled data we report accuracy for balanced and F1 score for imbalanced data. 
% We use F1 score and prediction accuracy for multiclass-labeled data. 
%
As a baseline, we also show the performance of the models trained with the real training data. All the numbers shown in the tables are averages over $5$ independent runs. 
%

% \begin{wraptable}{r}{0.5\textwidth}
% % \begin{table}[!t]
% \caption{Tabular datasets. num refers to numerical, cat refers to categorical, and ord refers to ordinal variables}
% \label{tab:data_description}
% \vskip 0.1in
% \centering
% \scalebox{0.8}{
% \begin{tabular}{l r c c}
% \toprule
% % \tabhead{dataset} & \tabhead{$\#$ samps}  & \tabhead{$\#$ classes} & \tabhead{ $\#$ features}  \\
% dataset & $\#$ samps  & $\#$ classes & $\#$ features  \\
% \midrule
% isolet & 4366 & 2 & 617 num \\
% covtype & 406698 &  7 & 10 num, 44 cat \\
% epileptic & 11500 & 2 & 178 num \\
% credit & 284807 & 2 & 29 num \\
% cervical & 753 & 2 & 11 num, 24 cat \\
% census & 199523 & 2 & 7 num, 33 cat\\
% adult & 22561 & 2 & 6 num, 8 cat\\
% intrusion & 394021 & 5 & 8 cat, 6 ord, 26 num\\
% \bottomrule\\
% \end{tabular}
% }
% % \end{table}
% \end{wraptable}

%%%%%%%%%%%%%%%%%%%%%%%%%%%%%%%%%%%%%%%%%%%%%%%%%%%%%%%%%%%%%%%%%%%%%%%%%%%%%%%%%%%%%%%%%%%%%%%%%%%%%%%%%%%

%%%%%%%%%%%%%%%%%%%%%%%%%%%%%%%%%%%%%%%%%%%%%%%%%%%%%%%%%%%%%%%%%%%%%%%%%%%

\begin{table}[ht!]
\vspace{-0.3cm}
\caption{Test accuracy on image data experiments. DP-MERF at $\epsilon=0.2$ outperforms other methods by a significant margin. $\delta = 10^{-5}$ in all private settings.}
\vskip 0.1in
\centering
\scalebox{0.9}{
\centering
\begin{tabular}{l|c|c}
\toprule
% & Real & DP-CGAN & DP-GAN & GS-WGAN   & DP-MERF & DP-MERF  \\
% & & $\epsilon=9.6$ & $\epsilon=9.6$ & $\epsilon=10$ & $\epsilon=1$ & $\epsilon=0.2$ \\
% \midrule
% \textbf{MNIST}        & 0.87 & 0.50 & 0.48 & 0.53 & 0.65 & 0.61 \\
% \midrule
% \textbf{FashionMNIST} & 0.78 & 0.39 & 0.46 & 0.50 & 0.62 & 0.53 \\
& \textbf{MNIST} & \textbf{FashionMNIST} \\
\midrule
Real data & 0.87 & 0.78   \\
DP-CGAN $\epsilon=9.6$ & 0.50 & 0.39   \\
DP-GAN $\epsilon=9.6$  & 0.48 & 0.46   \\
GS-WGAN  $\epsilon=10$ & 0.53 & 0.50   \\
DP-MERF $\epsilon=1$   & 0.65 & 0.61   \\
DP-MERF $\epsilon=0.2$ & 0.61 & 0.53   \\
\bottomrule
\end{tabular}}\label{tab:summary_all_image}
\vspace{-0.3cm}
\end{table}

\paragraph{Tabular data.}

% We begin the experiments with a set of tabular data which contain real-world information. The task is to produce synthetic data which has the same distribution as the real-world data, and thus has the same predictive performance as the real data. The task is challenging due to diversity of tabular data. The datasets contain either only numerical data (homogenous) or both numerical and categorical data (heterogenous datasets, which includes ordinal data such as person's level of education).

% The numerical features are both discrete and continuous values.  The categorical features can have two classes (e.g. whether a person smokes or not) or several classes (e.g. country of origin). The output labels are also categorical;  we include datasets with both binary and multiclass labels. \tabref{data_description} contains a summary of the datasets.

% We generate the data for the eight datasets in two settings, non-private and private. Subsequently, we evaluate them on twelve common classification methods. The methods are listed in Table 2 with the results for the most diverse dataset of all we tested, mixed multi-class intrusion dataset. The average results for all the datasets are summarized in Table. For comparison, we include the results of the DP-CGAN (the only benchmark with runnable code that we were able to get). 
We explore the extensions of DP-MERF for imbalanced and heterogeneous data on a number of real-world tabular datasets.
% We begin the experiments with a set of tabular data which contain real-world information. 
% The task is to produce synthetic data which has the same distribution as the real-world data, and thus has the same predictive performance as the real data. 
% The task is challenging due to diversity of tabular data. 
%The datasets we consider contain either only numerical data (homogenous) or both numerical and categorical data (including ordinal data such as education), which we call heterogenous datasets.
%
These datasets contain numerical features with both discrete and continuous values as well as categorical features with either two classes (e.g. whether a person smokes or not) or several classes (e.g. country of origin). The output labels are also categorical and we include datasets with both binary and multi-class labels. \tabref{data_description} summarizes the datasets.
\begin{figure}[ht]
    \centering
    % \includestandalone[mode=buildnew, scale=1.]{figs/mnist_sr_all_column.tex}
    \includegraphics[scale=1.0]{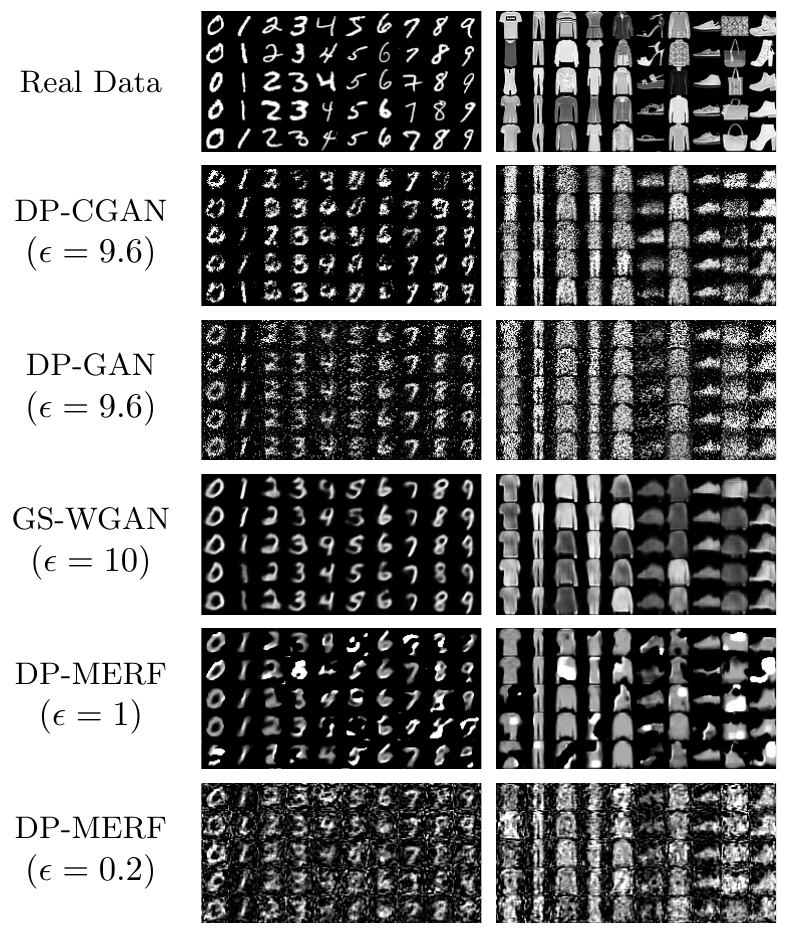}
    \caption{Generated MNIST and FashionMNIST samples from DP-MERF and comparison models with different levels of privacy.}
    \label{fig:generated_samples}
    \vspace{-0.3cm}
\end{figure}
%
% \begin{tabular}{|c|c|c|c|}\label{tab:data_description}
% \hline
% Dataset & $#$ samples &  $#$ classes & type  \\ \hline
% Isolet  & 4366 & 2 & homogeneous \\ \hline
% Covtype & 406698 & 7 & heterogeneous \\ \hline
% \end{tabular}
%
%
% \tabref{Intrusion} shows the performance of the $12$ predictive models trained by the samples from DP-MERF and DP-CGAN at the level of $(1, 10^{-5})$-DP, compared to the real training and test data (first column).
\tabref{summary_all_tabular} shows the average across the $12$ predictive models trained by the generated samples from DP-CGAN, DP-GAN and DP-MERF.
Results for the individual models can be found in \suppsecref{tab_data_details}.
Overall, our method achieved higher values on the evaluation metrics compared to other methods at the same privacy level.

As a side note, the reason the non-private MERF on Cervical data outperforms the real data is due to the small size of the dataset, which is prone to overfitting. Hence, the added sample variance in the generated data has a regularizing effect and improves the performance. 

% \fh{"DP-MERF produces high-quality samples which are only a few percentage points short of the real-world data. The method works well both with numerical and categorical data." - this is a bit of a stretch I think}
% %
% In the private setting, we perturb the mean embedding of the true data once using \algoref{rf_ME_joint}, resulting in a relatively small drop in evaluation metrics. 

\paragraph{Image data}

Finally, we evaluate our method on the image datasets, MNIST and FashionMNIST, which are common benchmarks used in \cite{DP_CGAN, DPGAN, gs-wgan}. We apply DP-MERF for balanced data and include convolutional layers, alternating with bi-linear up-sampling, in the generator network to take advantage of the inherent structure of image data.

% \paragraph{Accuracy} 
% For comparison, we test DP-CGAN \cite{DP_CGAN}, as well as our own implementation of an ensemble of 10 DP-GANs, where each model generates data for one class.\footnote{We were hoping to compare our model with the original DP-GAN and PATE-GAN code, but found the available implementations to be incomplete and with errors. We asked the authors for a functioning implementation but did not receive one. Our version of DP-GAN differs from the published version in that it uses standard DP-SGD \cite{DP_SGD} with gradient clipping rather than weight clipping because their privacy analysis was not reproducible.}
\tabref{summary_all_image} compares the test accuracy on real data based on generated samples from DP-CGAN, DP-GAN, GS-WGAN and DP-MERF. Results are averaged over 12 classifiers. For the comparison methods, we use the privacy levels reported in the respective papers, as they do not produce usable samples in the high privacy setting at $\epsilon \leq 1$.
It shows that DP-MERF outperforms the GAN based methods by a wide margin and maintains good performance under more meaningful privacy constraints of $(1,10^{-5})$-DP and $(0.2,10^{-5})$-DP. Low overall scores are largely due to the Adaboost and decision tree models which over-fit to the generated data while other models like logistic regression and multi-layer-perceptrions generalize much better. Detailed results are shown in \suppsecref{mnist_data_details}.

In the generated samples of the four tested methods in \figref{generated_samples}, we see that 
the samples from DP-MERF at $\epsilon = 0.2$ are noisier than those of GS-WGAN and DP-CGAN, while still achieving higher downstream accuracy.\footnote{As opposed to the version used in \cite{gs-wgan}, the DP-MERF presented here uses an improved generator architecture and privacy analysis, and outperforms GS-WGAN in the classification tasks.}
This indicates that the distinctive features of the data are preserved despite the noisy appearance of the DP-MERF samples.
In addition, a loss of sample diversity may explain the worse performance of GS-WGAN and DP-CGAN despite higher perceived sample quality, as we already have observed DP-CGAN dropping modes in the Gaussian data experiment.

\section{Summary and Discussion}
We propose a simple and practical algorithm using the random feature representation of kernel mean embeddings for DP data generation. Our method requires a significantly lower privacy budget to produce quality data samples compared to GAN-based approaches, tested on a synthetic dataset, $8$ tabular datasets and $2$ image datasets. The metrics we use are aimed at supervised learning tasks, but the method is not limited to this application. In the future work, we plan to evaluate our method on a more diverse set of tasks and expand it, to scale to more complex data.

\subsection*{Acknowledgments}
We thank Wittawat Jitkrittum, Jia-Jie Zhu, Amin Charusaie and the anonymous reviewers
for their valuable time helping us improve our manuscript.
All three authors are supported by the Max Planck Society.
M. Park and F. Harder are also supported by the Gibs Sch{\"u}le Foundation and the Institutional Strategy of the University of T{\"u}bingen (ZUK63) and the German Federal Ministry of Education and Research (BMBF): T\"ubingen AI Center, FKZ: 01IS18039B.
F. Harder is grateful for the support of the International Max Planck Research School for Intelligent Systems (IMPRS-IS).
K. Adamczewski is grateful for the support of the Max Planck ETH
Center for Learning Systems.

\bibliographystyle{plain}
\bibliography{ms}
% \bibliographystyle{icml2020}

% % \printbibliography

% % \end{refsection}

% %===================
\newpage
\onecolumn
\appendix

\begin{center}
    {\LARGE\textbf{Supplementary Material: \\
Differentially Private Random Feature Mean Embeddings for Synthetic Data Generation}}
\end{center}

\section{Background on distance measures for DP data generation}
\label{supp:background_distances}

Many recent papers on DP data generation have utilized the generative adversarial networks (GAN) \cite{NIPS2014_5423} framework, where a discriminator and a generator play a  min-max form of game to optimize for 
the \textit{Jensen-Shannon divergence} between the true and  synthetic data distributions \cite{DP_EM, DP_CGAN, PATE_GAN}.
The Jensen-Shannon divergence belongs to the family of divergences, known as \textit{Ali-Silvey distance}, \textit{Csisz\'ar's $\phi$-divergence} \cite{CIT-004}, defined as $D_{\phi}(P,Q) = \int_{M} \phi \left(\frac{P}{Q}\right) dQ$ where $M$ is a measurable space and $P,Q$ are probability distributions.
Depending on the form of $\phi$, $D_{\phi}(P,Q)$ recovers popular divergences\footnote{See Table 1 in \cite{nowozin2016} for various $\phi$ divergences in the context of GANs.} such as the Kullback-Liebler (KL) divergence ($\phi(t)=t\log t$).

Another popular family of distance measure is \textit{integral probability metrics (IPMs)}, which is defined by 
$D(P,Q) = \mbox{sup}_{f \in \mathcal{F}} \left| \int_{M} f d P - \int_{M} f dQ \right|$ where  $\mathcal{F}$ is a class of real-valued bounded measurable functions on $M$. 
Depending on the class of functions, there are several popular choices of IPMs.
%
%\paragraph{Wasserstein distance}
For instance, when $\mathcal{F} = \{f: \Vert f \Vert_L \leq 1 \}$, where $\Vert f \Vert_L := \mbox{sup}\{|f(x)-f(y)|/\rho(x,y): x\neq y \in M \}$ for a metric space $(M, \rho)$, 
$D(P,Q)$ yields the \textit{Kantorovich} metric, and when $M$ is separable, the Kantorovich metric 
% is the dual representation of the 
recovers the
\textit{Wasserstein} distance, a popular choice for generative modelling such as Wasserstein-GAN and Wasserstein-VAE \cite{Arjovsky2017WassersteinG, tolstikhin2018wasserstein}. The GAN framework with the Wasserstein distance was also used for DP data generation \cite{DPGAN, DBLP:conf/sec/FrigerioOGD19}.

%\paragraph{Maximum mean discrepancy (MMD)} 
As another example of IPMs, when $\mathcal{F} = \{f: \Vert f \Vert_{\mathcal{H}} \leq 1 \}$, i.e., the function class is a unit ball in reproducing kernel Hilbert space (RKHS) ${\mathcal{H}}$ associated with a positive-definite kernel $k$,  $D(P,Q)$ yields the \textit{maximum mean discrepancy} (MMD), 
% \begin{align}
$MMD(P,Q) =  \mbox{sup}_{f \in \mathcal{F}} \left| \int_{M} f d P - \int_{M} f dQ \right|$.
% \end{align} 
In this case finding a supremum is analytically tractable and the solution is represented by the difference in the mean embeddings of each probability measure: $MMD(P,Q) =  \Vert \mu_{P} - \mu_{Q} \Vert_{H}$, where 
$ \mu_{P}  = \mathbb{E}_{\vx \sim \mathbb{P}}[k(\vx, \cdot)]$ and
$ \mu_{\mathbb{Q}}  = \mathbb{E}_{\vy \sim \mathbb{Q}}[k(\vy, \cdot)]$. 
For a characteristic kernel $k$, the squared MMD forms a metric, i.e., $MMD^2  = 0$, if and only if $P=Q$. 
% In practice, MMD estimators are used  
%
MMD is also a popular choice for generative modelling in the GAN frameworks \cite{mmd_gan}, as MMD compares two probability measures in terms of all possible moments (no information loss due to a selection of a certain set of moments); and the MMD estimator is in closed form (\eqref{MMD_full}) and easy to compute by the pair-wise evaluations of a kernel function using the points drawn from $P$ and $Q$.
 
In this work, we propose to use a particular form of MMD via \textit{random Fourier feature} representations \cite{rahimi2008random} of kernel mean embeddings for DP data generation.

% \newpage

\section{Derivation of the bound on the expected absolute error}
\label{supp:proof_of_exp_abs_err}

Given the samples drawn from two probability distributions: $X_{m}=\{x_{i}\}_{i=1}^{m} \sim P$ and $X'_{n}=\{x'_{i}\}_{i=1}^{n} \sim Q$, the biased MMD estimator is given by \cite{Gretton2012}:
% {\small
\begin{align} %\label{eq:MMD_full}
% \mathrm{MMD}^2(P,Q)
%   &:= \mathbb{E}_{x,x'\sim P}k(x,x')+\mathbb{E}_{y,y'\sim Q}k(y,y') \nonumber \\ &-2\mathbb{E}_{x\sim P}\mathbb{E}_{y\sim Q}k(x,y), \\
 \widehat{\mathrm{MMD}}^2(X_{m},X'_{n}) &= \tfrac{1}{m^2}\sum_{i,j=1}^{m}k(x_{i},x_{j}) +\tfrac{1}{n^2}\sum_{i,j=1}^{n}k(x'_{i},x'_{j}) 
 -\tfrac{2}{mn}\sum_{i=1}^{m}\sum_{j=1}^{n}k(x_{i},x'_{j}).
\end{align}
% }
%
The MMD estimator using the $D$-dimensional random Fourier features $\hat\vphi$ for the mean embeddings  $\widehat{\vmu}_P = \frac{1}{m}\sum_{i=1}^{m}\hat{\vphi}(\vx_i)$ and $\widehat{\vmu}_Q = \frac{1}{n}\sum_{i=1}^{n}\hat{\vphi}(G_{\vtheta}(\vz_i))$  is defined as 
\begin{align} % \label{eq:MMD_rf}
\widehat{\mathrm{MMD}}_{rf}^{2}(P,Q)=\bigg\|\widehat{\vmu}_P - \widehat{\vmu}_Q \bigg\|_{2}^{2}.
\end{align}
The noisy MMD is given by 
\begin{align}  % \label{eq:MMD_rf_vanilla}
\widetilde{\mathrm{MMD}}_{rf}^{2}(P_\vx, Q_{\tilde{\vx}_\vtheta}) = \bigg\|\widetilde{\vmu}_P-\widehat{\vmu}_Q\bigg\|_{2}^{2}, 
\end{align} 
where $\widetilde{\vmu}_P$  is given by 
\begin{align}  % \label{eq:noise_up_rf_me}
\widetilde{{\vmu}}_P = \widehat{\vmu}_P  + \vn
\end{align} where $\vn$ is a draw from a Gaussian distribution $\vn \sim \Nrm(0, \Delta_{\widehat{\vmu}_P}^2\sigma^2 I)$. Note that for the bounded kernels with bound 1, $\Delta_{\widehat{\vmu}_P} = \frac{2}{m}$. 

Now the proposition is given as follows. 
\begin{prop}
Given samples $\vx = \{x_i\}_{i=1}^m \sim P$ and $ \tilde{\vx}=\{\tilde{x}_j\}_{j=1}^n \sim Q$, the expected absolute error between the noisy random-feature (squared) MMD defined in \eqref{MMD_rf_vanilla} and the squared MMD \eqref{MMD_full} is bounded by 
\begin{align}
 &\Em_{\vn}\Em_{\hat\vphi} \left[\left|\widetilde{\mathrm{MMD}}_{rf}^{2}(\vx, \tilde{\vx}) - \widehat{\mathrm{MMD}}^2(\vx, \tilde\vx) \right|  \right], \\
 &\leq \left(\frac{4D \sigma^2 }{m^2}+  \frac{8\sqrt{2} \sigma}{m}  \frac{\Gamma\big((D+1)/2\big)}{\Gamma\big(D/2\big)}\right) + 8\sqrt{\frac{2\pi}{D}}
\end{align} 
\end{prop}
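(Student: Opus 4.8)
The plan is to follow the decomposition sketched in the remark: insert the noiseless random-feature estimator $\widehat{\mathrm{MMD}}_{rf}^2 = \|\widehat{\vmu}_P - \widehat{\vmu}_Q\|_2^2$ between the two quantities and apply the triangle inequality,
\begin{align*}
\big|\widetilde{\mathrm{MMD}}_{rf}^2 - \widehat{\mathrm{MMD}}^2\big| \leq \big|\widetilde{\mathrm{MMD}}_{rf}^2 - \widehat{\mathrm{MMD}}_{rf}^2\big| + \big|\widehat{\mathrm{MMD}}_{rf}^2 - \widehat{\mathrm{MMD}}^2\big|,
\end{align*}
so that $\Em_\vn\Em_{\hat\vphi}$ of the left-hand side is bounded by the sum of the two expectations. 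The first expectation will produce the parenthesised term in \eqref{upper_bound}, and the second will produce the $8\sqrt{2\pi/D}$ term.

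For the noise term, write $\vd = \widehat{\vmu}_P - \widehat{\vmu}_Q$, so that $\widetilde{\vmu}_P - \widehat{\vmu}_Q = \vd + \vn$ and hence $\widetilde{\mathrm{MMD}}_{rf}^2 - \widehat{\mathrm{MMD}}_{rf}^2 = 2\vd\trp\vn + \|\vn\|_2^2$. Taking the inner expectation over $\vn$ and using the triangle inequality gives $\Em_\vn|2\vd\trp\vn + \|\vn\|_2^2| \leq 2\,\Em_\vn|\vd\trp\vn| + \Em_\vn\|\vn\|_2^2$. The second piece is exactly $\Em_\vn\|\vn\|_2^2 = D\,\Delta_{\widehat{\vmu}_P}^2\sigma^2 = 4D\sigma^2/m^2$. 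For the first piece I would apply Cauchy--Schwarz, $|\vd\trp\vn| \leq \|\vd\|_2\|\vn\|_2$, bound $\|\vd\|_2 \leq 2$ (each mean embedding is an average of unit-norm feature vectors, so $\|\widehat{\vmu}_P\|_2,\|\widehat{\vmu}_Q\|_2 \leq 1$ by the triangle inequality; crucially this bound is deterministic, so it survives the outer $\Em_{\hat\vphi}$), and use the scaled chi-distribution mean $\Em_\vn\|\vn\|_2 = \Delta_{\widehat{\vmu}_P}\,\sigma\sqrt 2\,\Gamma((D+1)/2)/\Gamma(D/2)$. Together these give the term $8\sqrt 2\,\sigma\,\Gamma((D+1)/2)/(m\,\Gamma(D/2))$.

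For the random-feature term, I would expand both $\widehat{\mathrm{MMD}}_{rf}^2$ and $\widehat{\mathrm{MMD}}^2$ into their three pairwise double sums and subtract termwise, so that the difference is a weighted sum of the per-pair errors $\hat\vphi(x)\trp\hat\vphi(x') - k(x,x')$. Each such term is an unbiased Monte Carlo estimate of the kernel (by Bochner's theorem) built from $D/2$ i.i.d.\ bounded summands, so its variance is $O(1/D)$ and, by Jensen's inequality, its expected absolute value is $O(1/\sqrt D)$. Applying the triangle inequality and linearity of expectation, the prefactors $1/m^2$, $1/n^2$, $2/mn$ collapse against the number of pairs to a constant total of $4$, yielding a bound of the form $C/\sqrt D$; tracking the variance constant produces $8\sqrt{2\pi/D}$.

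I expect the bookkeeping in this second term to be the most delicate part: one must bound the expected absolute kernel-approximation error uniformly over all pairs while respecting that the same random frequencies $\{\omega_j\}$ are shared across every term, so the per-pair errors are \emph{not} independent. Linearity of expectation together with the triangle inequality sidesteps any need for independence, but care is needed to pin down the variance constant, and hence the final $\sqrt{2\pi}$ factor, correctly. By contrast, the noise term is essentially a direct computation once the deterministic bound $\|\vd\|_2 \leq 2$ is in hand.
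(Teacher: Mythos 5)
Your decomposition and your treatment of the noise term coincide with the paper's proof: the same triangle-inequality split through $\widehat{\mathrm{MMD}}_{rf}^2$, the same expansion $2\vd\trp\vn+\|\vn\|_2^2$, the chi-square second moment, Cauchy--Schwarz with the deterministic bound $\|\vd\|_2\leq 2$, and the chi-distribution mean, yielding exactly the parenthesised term with $\Delta_{\widehat{\vmu}_P}=2/m$. Where you genuinely diverge is the random-feature term. The paper does not argue pairwise: it invokes the Sutherland--Schneider result, applying McDiarmid's inequality to the mean-map-kernel estimates (each frequency $\omega_j$ perturbs the estimate by at most $4/D$) to get the tail bound $\Pr[|\widehat{\mathrm{MMD}}_{rf}^2-\widehat{\mathrm{MMD}}^2|\geq t]\leq 2\exp(-Dt^2/128)$, and then integrates the tail, $\Em|X|=\int_0^\infty\Pr[|X|\geq t]\,dt$, which is precisely where the constant $8\sqrt{2\pi/D}$ (with its $\sqrt{\pi}$ from the Gaussian integral) comes from. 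Your route --- per-pair Monte Carlo variance of $\tfrac{2}{D}\sum_{j=1}^{D/2}\cos(\omega_j\trp(x-x'))$, Jensen, and linearity of expectation with the weights summing to $4$ --- is also valid (the shared frequencies are indeed harmless under linearity, as you note) and in fact yields a \emph{tighter} constant, on the order of $4\sqrt{2/D}$, since the variance of each bounded cosine summand is at most $1$. The one inaccuracy in your write-up is the claim that careful bookkeeping of your variance argument "produces $8\sqrt{2\pi/D}$": it does not and cannot, because the $\sqrt{2\pi}$ is an artifact of integrating a sub-Gaussian tail rather than of a second-moment computation; what you get instead is a smaller quantity, which still proves the stated upper bound but is not the paper's constant.
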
 where $\Gamma$ is the Gamma function.

To prove this proposition, we first rewrite the absolute error in terms of two terms due to the triangle inequality:
\begin{align} 
 &\Em_{\vn}\Em_{\hat\vphi} \left[\left|\widetilde{\mathrm{MMD}}_{rf}^{2}(\vx, \tilde{\vx}) - \widehat{\mathrm{MMD}}^2(\vx, \tilde\vx) \right|  \right] \nonumber \\
 &\leq \Em_{\vn}\Em_{\hat\vphi} \left[\left|\widetilde{\mathrm{MMD}}_{rf}^{2}(\vx, \tilde{\vx}) - \widehat{\mathrm{MMD}}_{rf}^2(\vx, \tilde\vx) \right|  \right]  + 
 \Em_{\hat{\vphi}}\left[\left|\widehat{\mathrm{MMD}}_{rf}^{2}(\vx, \tilde{\vx}) - \widehat{\mathrm{MMD}}^2(\vx, \tilde\vx) \right|  \right]. 
\end{align} 
What follows next proves each of these terms. 
% Proof of the above proposition consists of two parts. The first part considers the randomness due to the random features. The second part considers the randomness due to the noise added for privacy guarantee.  

\subsection{Randomness due to random features} 
We restate the result of [Sec.\ 3.3 of Sutherland and Schneider 2016]. 
\begin{lem}[Sec.\ 3.3 of Sutherland and Schneider 2016]\label{lemma:MMD_UB}
Given samples $\vx = \{x_i\}_{i=1}^n \sim P$ and $ \tilde{\vx}=\{\tilde{x}_j\}_{j=1}^m \sim Q$, the probabilistic bound between the approximate MMD with random features, denoted by $\widehat{\mathrm{MMD}}_{rf}(\vx, \tilde{\vx})$ and the original MMD, denoted by $\widehat{\mathrm{MMD}}(\vx, \tilde{\vx})$, holds 
\begin{align}
    &\mathbb{P}\left[\left|\widehat{\mathrm{MMD}}_{rf}^{2}(\vx, \tilde{\vx}) - \widehat{\mathrm{MMD}}^2(\vx, \tilde{\vx}) \right| \geq t_1 \right] \leq 2 \exp\left(-\frac{1}{128}D t_1^2\right) := U_1,
    \end{align}
where the randomness comes from the random features, and $\mathbb{E}_{\hat\vphi} [ \widehat{\mathrm{MMD}}_{rf}(\vx, \tilde{\vx}) ] = \mathrm{MMD}(\vx, \tilde{\vx})$. 
\end{lem}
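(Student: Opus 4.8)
The plan is to prove the concentration bound by exploiting two structural facts: that the squared random-feature MMD is an \emph{unbiased} estimator of the empirical squared MMD, and that it decomposes into a sum of independent, bounded contributions, one per random frequency. Writing $\psi = \widehat{\vmu}_P - \widehat{\vmu}_Q \in \mathbb{R}^{D}$ for the difference of the two empirical random-feature mean embeddings, we have $\widehat{\mathrm{MMD}}_{rf}^{2}(\vx, \tilde{\vx}) = \|\psi\|_2^2$. By construction of the feature map in \eqref{RF}, coordinates $j$ and $j+D/2$ of $\psi$ depend only on the single frequency $\omega_j$, so $\|\psi\|_2^2 = \sum_{j=1}^{D/2} h_j(\omega_j)$ with $h_j(\omega_j) = \psi_j^2 + \psi_{j+D/2}^2$, where $\omega_1, \dots, \omega_{D/2}$ are i.i.d.\ draws from $\Lambda$. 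This exhibits $\widehat{\mathrm{MMD}}_{rf}^{2}$ as a sum of $D/2$ independent random variables, which is the object on which a standard concentration inequality will act.

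First I would verify unbiasedness. Expanding $\|\psi\|_2^2$ into its three groups of inner-product terms and using that each random-feature inner product is an unbiased estimator of the kernel, $\mathbb{E}_{\hat{\vphi}}[\hat{\vphi}(x)^\top \hat{\vphi}(x')] = k(x,x')$ (a consequence of Bochner's theorem as recalled in \secref{background}), linearity of expectation gives $\mathbb{E}_{\hat{\vphi}}[\widehat{\mathrm{MMD}}_{rf}^{2}(\vx, \tilde{\vx})] = \widehat{\mathrm{MMD}}^2(\vx, \tilde{\vx})$. Hence the quantity inside the probability, $\widehat{\mathrm{MMD}}_{rf}^{2} - \widehat{\mathrm{MMD}}^2$, is exactly the deviation of the independent sum $\sum_j h_j(\omega_j)$ from its mean, so a deviation bound around the mean immediately yields the claimed statement.

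Next I would bound the range of each summand $h_j$. For any fixed $\omega$, the empirical feature average of a single distribution has the form $\tfrac{1}{m}\sum_i (\cos(\omega^\top x_i), \sin(\omega^\top x_i))$, an average of unit vectors and hence of norm at most $1$; carrying the $\sqrt{2/D}$ normalization through the squared-norm expansion then gives $\psi_j^2 + \psi_{j+D/2}^2 \le \tfrac{8}{D}$, so each $h_j$ lies in an interval of length at most $\tfrac{8}{D}$ (even the crude coordinatewise bound $\tfrac{16}{D}$ would suffice). Applying Hoeffding's inequality, equivalently McDiarmid's bounded-differences inequality, to the independent sum $\sum_{j=1}^{D/2} h_j(\omega_j)$ with these ranges produces a bound of the form $2\exp(-c\,D\,t_1^2)$; collecting constants gives $U_1 = 2\exp(-\tfrac{1}{128}D t_1^2)$, the constant $\tfrac{1}{128}$ being a valid (deliberately loose) choice since the tighter ranges yield a larger exponent.

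The main obstacle is the careful accounting in the boundedness step: one must track the $\sqrt{2/D}$ normalization of the features through the squared-norm expansion and verify the per-frequency range, so that the sum of squared ranges $\sum_{j=1}^{D/2} c_j^2$ scales as $1/D$, which is precisely what places the factor $D$ in the exponent. The unbiasedness and the independence decomposition are routine once the feature-map structure of \eqref{RF} is unpacked; everything else follows from a standard concentration inequality for sums of bounded independent random variables.
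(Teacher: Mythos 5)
Your proof is correct, but it takes a genuinely different route from the paper's. The paper (following Sutherland and Schneider) first decomposes $\widehat{\mathrm{MMD}}^2$ into three mean-map-kernel terms, $\mathrm{MMK}(\vx,\vx)+\mathrm{MMK}(\tilde\vx,\tilde\vx)-2\,\mathrm{MMK}(\vx,\tilde\vx)$, applies McDiarmid's bounded-differences inequality to each term separately (changing one frequency $\omega_k$ moves each MMK estimate by at most $4/D$, giving the per-term tail bound $2\exp(-\tfrac{1}{8}Dt_1^2)$), and then recombines the three deviations; the recombination forces the split $t_1\mapsto t_1/4$, which is exactly where the constant $\tfrac{1}{128}$ comes from (strictly speaking, the union bound over the three terms would give prefactor $6$ rather than $2$). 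You instead observe that $\widehat{\mathrm{MMD}}_{rf}^{2}=\|\psi\|_2^2$ is \emph{itself} a sum of $D/2$ independent summands $h_j(\omega_j)=\psi_j^2+\psi_{j+D/2}^2$, each lying in $[0,8/D]$, verify that this sum is unbiased for the (biased, diagonal-including) estimator $\widehat{\mathrm{MMD}}^2$ of \eqref{MMD_full} --- which holds exactly because $\hat{\vphi}(x)\trp\hat{\vphi}(x)=1=k(x,x)$ for the normalized translation-invariant kernel --- and apply Hoeffding once to the independent sum. Your accounting is right: $\sum_{j=1}^{D/2}(8/D)^2=32/D$ gives $2\exp(-\tfrac{1}{16}Dt_1^2)$, which is strictly stronger than the stated bound, so the lemma follows a fortiori; even your crude coordinatewise range $16/D$ gives $2\exp(-\tfrac{1}{64}Dt_1^2)$. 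Your route buys a cleaner argument (no three-way splitting and union bound), a sharper exponent, the correct prefactor $2$, and it delivers the unbiasedness side-claim of the lemma as a byproduct in its correct squared form, $\Em_{\hat\vphi}[\widehat{\mathrm{MMD}}_{rf}^{2}]=\widehat{\mathrm{MMD}}^2$; the paper's route buys conformity with the cited reference and per-MMK tail bounds that can be reused on their own.
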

\begin{proof}
To prove the proposition, we first consider the mean map kernel (MMK) defined by 
\begin{align}
    \mathrm{MMK}(\vx, \tilde\vx) = \frac{1}{nm} \sum_{i=1}^n\sum_{j=1}^m k(x_i, \tilde{x}_j) \approx \mathrm{MMK}_{\hat\vphi}(\vx, \tilde\vx):=\hat\vphi(\vx)\trp \hat\vphi(\tilde\vx),
\end{align} which can be approximated by the random feature representations, denoted by $\mathrm{MMK}_{\hat\vphi}(\vx, \tilde\vx)$. The random feature mean-embedding of $P$ is denoted by  $\hat{\vphi}(\vx)$. 
Similarly, we can define $\mathrm{MMK}(\vx,\vx)$ and $\mathrm{MMK}(\tilde\vx, \tilde\vx)$, and define MMD in terms of MMKs
\begin{align}\label{eq:MMD_2}
    \widehat{\mathrm{MMD}}^2(\vx, \tilde\vx) = \mathrm{MMK}(\vx,\vx) + \mathrm{MMK}(\tilde\vx, \tilde\vx) - 2 \mathrm{MMK}(\vx, \tilde\vx).
\end{align}
Notice that when we use the cosine/sine representation of random features, changing the frequency $\omega_k$ to $\hat\omega_k$ causes a bounded difference in the $k$th coordinate of the MMK estimate, $\mathrm{MMK}_\vphi(\vx, \tilde\vx)$:
\begin{align}
    \left|\frac{1}{nm}\sum_{i=1}^n\sum_{j=1}^m \frac{2}{D} \left[\cos((\omega_k\trp(x_i -\tilde{x}_j)) - \cos((\omega'_k\trp(x_i -\tilde{x}_j)) \right] \right| \leq \frac{4}{D}.
\end{align}
Due to this bounded difference in each coordinate of random feature MMK, we can compute the tail bound using the McDiarmid's inequality,
\begin{align}
        \mbox{Pr}\left[\left|\mathrm{MMK}_{\hat\vphi}(\vx, \tilde{\vx}) - \mathrm{MMK}(\vx, \tilde{\vx}) \right| \geq t_1 \right] \leq 2 \exp\left(-\frac{1}{8}D t_1^2\right).
\end{align}
Now using the definition of $\mathrm{MMD}^2$ given in \eqref{MMD_2}, we obtain the tail bound.
\begin{align}
    \mbox{Pr}\left[\left|\widehat{\mathrm{MMD}}_{rf}^{2}(\vx, \tilde{\vx}) - \widehat{\mathrm{MMD}}^2(\vx, \tilde{\vx}) \right| \geq t_1 \right] \leq 2 \exp\left(-\frac{1}{128}D t_1^2\right). 
\end{align}
\end{proof}

As a result of \lemmaref{MMD_UB}, the expected absolute error of the random-feature MMD is bounded by
\begin{lem}[Sec.\ 3.3 of Sutherland and Schneider 2016]\label{lemma: mean_square_error_mmd}
Given samples $\vx = \{x_i\}_{i=1}^n \sim P$ and $ \tilde{\vx}=\{\tilde{x}_j\}_{j=1}^m \sim Q$, the probabilistic bound between the approximate MMD with random features, denoted by $\widehat{\mathrm{MMD}}_{rf}(\vx, \tilde{\vx})$ and the original MMD, denoted by $\mathrm{MMD}(\vx, \tilde{\vx})$, holds 
\begin{align}\label{eq:exp_moment}
    \Em_{\hat\vphi}\left[\left|\widehat{\mathrm{MMD}}_{rf}^{2}(\vx, \tilde{\vx}) - \widehat{\mathrm{MMD}}^2(\vx, \tilde{\vx}) \right| \right] \leq  8 \sqrt{2\pi/D}.
    \end{align}
\end{lem}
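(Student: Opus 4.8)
The plan is to convert the sub-Gaussian tail bound from \lemmaref{MMD_UB} into a bound on the expected absolute error via the standard tail-integral (layer-cake) representation of the expectation of a nonnegative random variable. Writing $Z := \widehat{\mathrm{MMD}}_{rf}^{2}(\vx, \tilde{\vx}) - \widehat{\mathrm{MMD}}^2(\vx, \tilde{\vx})$, the target quantity is $\Em_{\hat\vphi}[|Z|]$, and since $|Z|\geq 0$ we may write $\Em_{\hat\vphi}[|Z|] = \int_0^\infty \mathbb{P}(|Z| \geq t)\, dt$. This identity is exactly the bridge we need, because \lemmaref{MMD_UB} controls precisely the probability appearing inside the integral.

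First I would substitute the tail bound $\mathbb{P}(|Z| \geq t) \leq 2\exp(-D t^2/128)$ from \lemmaref{MMD_UB} directly into this integral. Although this bound exceeds $1$ for small $t$, where the trivial estimate $\mathbb{P}(|Z|\geq t) \leq 1$ is sharper, this over-counting only loosens the estimate, so integrating the exponential bound over all of $[0,\infty)$ still yields a valid upper bound on $\Em_{\hat\vphi}[|Z|]$.

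Next I would evaluate the resulting Gaussian integral. Setting $a = D/128$, the elementary identity $\int_0^\infty e^{-a t^2}\, dt = \tfrac{1}{2}\sqrt{\pi/a}$ gives $\int_0^\infty 2\exp(-D t^2/128)\, dt = \sqrt{128\pi/D}$. Simplifying the constant through $\sqrt{128} = 8\sqrt{2}$ yields $\sqrt{128\pi/D} = 8\sqrt{2\pi/D}$, which is exactly the claimed bound in \eqref{exp_moment}.

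There is no genuine obstacle here: the argument is a one-line passage from tail bound to expectation followed by a routine Gaussian integral. The only point worth flagging is the mild slack introduced by integrating a tail bound that is vacuous (larger than one) near the origin; since we seek only an upper bound this is harmless, and it is a fortunate coincidence that the direct integration already reproduces the clean constant $8\sqrt{2\pi/D}$ rather than something strictly larger.
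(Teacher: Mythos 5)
Your proposal is correct and follows exactly the paper's own argument: the layer-cake identity $\Em[|Z|]=\int_0^\infty \Pr(|Z|\geq t)\,dt$, substitution of the tail bound from \lemmaref{MMD_UB}, and evaluation of the Gaussian integral $\int_0^\infty 2e^{-Dt^2/128}\,dt = 8\sqrt{2\pi/D}$. Your remark about the tail bound being vacuous near the origin is a fair observation, but as you note it only loosens the upper bound and does not affect validity.
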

\begin{proof}
For a non-negative random variable, $\left|\widetilde{\mathrm{MMD}}_{rf}^{2}(P, Q) - \widehat{\mathrm{MMD}}^2(P, Q) \right| $
\begin{align}
    \Em_{{\hat\vphi}}\left[\left|\widehat{\mathrm{MMD}}_{rf}^{2}(\vx, \tilde{\vx}) - \widehat{\mathrm{MMD}}^2(\vx, \tilde{\vx}) \right| \right] &= \int_0^\infty \mbox{Pr}\left[ \left|\widehat{\mathrm{MMD}}_{rf}^{2}(\vx, \tilde{\vx}) - \widehat{\mathrm{MMD}}^2(\vx, \tilde{\vx}) \right| \geq t_1 \right] dt_1, \\
    &\leq 2 \int_0^\infty \exp\left(-\frac{1}{128}D t_1^2\right) dt_1, \mbox{due to \lemmaref{MMD_UB}}, \\
    & \; = 8\sqrt{\frac{2\pi}{D}},  \mbox{due to the Gaussian integral}.
\end{align}
\end{proof}

\subsection{Randomness due to noise for privacy} 
% {\color{blue}
The following remark bound the first moment of the privatized MMD proxy $\widetilde{\mathrm{MMD}}_{rf}$ and the MMD proxy $\widehat{{\rm MMD}}_{rf}$.
\begin{lem}
Let $\widetilde{\mathrm{MMD}}_{rf}(\vx, \tilde\vx) := \| \widehat{\vmu}_P(\vx) + \vn - \widehat{\vmu}_Q(\tilde\vx) \|_2 $, where $\vn \sim \Nrm(0, \sigma^2 \Delta_{\widehat{\vmu}_P}^2 I_D)$. Also, let $\widehat{\mathrm{MMD}}_{rf}(\vx, \tilde\vx) := \| \widehat{\vmu}_P(\vx) - \widehat{\vmu}_Q(\tilde\vx) \|_2 $.  
Then,
\begin{align}
    \Em_{\vn}\Em_{\hat\vphi}\Big[\big|\widetilde{\mathrm{MMD}}^2_{rf}(\vx, \tilde\vx)-\widehat{{\rm MMD}}^2_{rf}(\vx, \tilde\vx)\big|\Big]
   \leq \frac{D \sigma^2}{m^2}+  4\sqrt{2} \sigma \frac{\Gamma\big((D+1)/2\big)}{m\Gamma\big(D/2\big)}
\end{align}
\end{lem}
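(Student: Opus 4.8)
The plan is to expand the squared difference $\widetilde{\mathrm{MMD}}^2_{rf}-\widehat{\mathrm{MMD}}^2_{rf}$ algebraically, exploiting that the two quantities differ only by the additive noise vector $\vn$ inside the first mean embedding. Writing $\vd := \widehat{\vmu}_P(\vx)-\widehat{\vmu}_Q(\tilde\vx)$, we have $\widetilde{\mathrm{MMD}}^2_{rf}=\|\vd+\vn\|_2^2=\|\vd\|_2^2 + 2\vd\trp\vn + \|\vn\|_2^2$, so that
\begin{align}
\widetilde{\mathrm{MMD}}^2_{rf}(\vx,\tilde\vx)-\widehat{\mathrm{MMD}}^2_{rf}(\vx,\tilde\vx) = 2\,\vd\trp\vn + \|\vn\|_2^2. \nonumber
\end{align}
Taking absolute values and applying the triangle inequality gives a cross term $2|\vd\trp\vn|$ and a quadratic term $\|\vn\|_2^2$, so I would bound the expectation of each separately.

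For the quadratic term, since $\vn\sim\Nrm(0,\sigma^2\Delta_{\widehat{\vmu}_P}^2 I_D)$ with $\Delta_{\widehat{\vmu}_P}=\tfrac{2}{m}$, a routine computation yields $\Em\|\vn\|_2^2 = D\sigma^2\Delta_{\widehat{\vmu}_P}^2 = \tfrac{4D\sigma^2}{m^2}$. This does not match the stated $\tfrac{D\sigma^2}{m^2}$, so I suspect the intended sensitivity in this lemma is $\tfrac{1}{m}$ (perhaps they absorb the factor of $2$ elsewhere, or count neighbouring datasets by addition/removal rather than replacement); assuming $\Delta^2=\tfrac{1}{m^2}$ recovers $\Em\|\vn\|_2^2=\tfrac{D\sigma^2}{m^2}$ exactly. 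For the cross term, the key observation is that $\vd$ depends only on the random features $\hat\vphi$ and is independent of $\vn$, so I first take $\Em_{\vn}$ conditional on $\vd$. Here the hard part is that $\Em_{\vn}|\vd\trp\vn|$ is not simply zero despite $\Em_{\vn}[\vd\trp\vn]=0$, because of the absolute value; instead $\vd\trp\vn$ is a univariate Gaussian with mean $0$ and variance $\sigma^2\Delta^2\|\vd\|_2^2$, whose mean absolute value is $\sqrt{2/\pi}\,\sigma\Delta\|\vd\|_2$.

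Combining, $\Em_{\vn}2|\vd\trp\vn| = 2\sqrt{2/\pi}\,\sigma\Delta\|\vd\|_2$, which introduces a dependence on $\|\vd\|_2 = \widehat{\mathrm{MMD}}_{rf}(\vx,\tilde\vx)$. To remove this and arrive at the $\Gamma$-ratio appearing in the statement, I would bound $\|\vd\|_2 \le \|\widehat{\vmu}_P\|_2 + \|\widehat{\vmu}_Q\|_2 \le 2$, using that each mean embedding is an average of unit-norm random feature vectors; the $\Gamma$-ratio $\Gamma((D+1)/2)/\Gamma(D/2)$ is the normalizing constant of the chi distribution and equals $\Em\|\vn\|_2/(\sqrt 2\,\sigma\Delta)$, so the cleanest route may instead be to bound the cross term via Cauchy--Schwarz and the chi-distribution mean, writing $\Em|\vd\trp\vn|\le \|\vd\|_2\,\Em\|\vn\|_2$ with $\Em\|\vn\|_2=\sqrt 2\,\sigma\Delta\,\Gamma((D+1)/2)/\Gamma(D/2)$. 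With $\|\vd\|_2\le 2$ and $\Delta=\tfrac{1}{m}$ this produces the claimed $4\sqrt 2\,\sigma\,\Gamma((D+1)/2)/(m\,\Gamma(D/2))$. The main obstacle is thus the absolute-value cross term: reconciling the two candidate computations (the exact folded-Gaussian constant $\sqrt{2/\pi}$ versus the Cauchy--Schwarz bound with the chi-mean) and tracking the factor of $2$ in the sensitivity, since the clean $\Gamma$-ratio form strongly suggests the authors route the cross term through Cauchy--Schwarz followed by the chi-distribution mean rather than the exact folded-Gaussian identity. Finally I would take $\Em_{\hat\vphi}$ of both bounds; since neither bound depends on $\hat\vphi$ after using $\|\vd\|_2\le 2$, the outer expectation is trivial and the two pieces add to give the stated result.
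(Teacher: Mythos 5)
Your proposal follows essentially the same route as the paper's proof: expand the difference as $2\vd\trp\vn+\|\vn\|_2^2$, apply the triangle inequality, use the chi-square second moment for the quadratic term, and bound the cross term via Cauchy--Schwarz with the chi-distribution mean $\Em\|\vn\|_2=\sqrt{2}\,\sigma\Delta\,\Gamma((D+1)/2)/\Gamma(D/2)$ together with $\|\vd\|_2\le 2$ (the paper does exactly this, not the folded-Gaussian computation). The factor-of-$4$ and factor-of-$2$ discrepancies you flagged are a genuine inconsistency in the paper itself: the lemma's displayed bound corresponds to $\Delta_{\widehat{\vmu}_P}=1/m$, whereas the surrounding text sets $\Delta_{\widehat{\vmu}_P}=2/m$ and the main proposition correctly carries the resulting constants $4D\sigma^2/m^2$ and $8\sqrt{2}\sigma/m$.
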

\begin{proof}  
\begin{align}
    \Em_{\vn}\Em_{\hat\vphi}\Big[\big|\widetilde{\mathrm{MMD}}^2_{rf}(\vx, \tilde\vx)-\widehat{{\rm MMD}}^2_{rf}(\vx, \tilde\vx)\big|\Big]
    &\overset{(a)}{=} \Em_{\hat\vphi}\left[\Em_{\vn}\bigg[ \left| \vn\trp\vn + 2 \vn\trp( \widehat{\vmu}_P(\vx) - \widehat{\vmu}_Q(\tilde\vx)) \right|\bigg]\right], \\
    % &=\Em\bigg[\Big|\| \widehat{\vmu}_P(\vx) + \vn - \widehat{\vmu}_Q(\tilde\vx) \|_2^2-\| \widehat{\vmu}_P(\vx) - \widehat{\vmu}_Q(\tilde\vx) \|_2^2 \Big|\bigg]\nonumber \\
    % &= \Em\bigg[\Big|\|n\|_2^2+2\big\langle \vn, \widehat{\vmu}_P(\vx)-\widehat{\vmu}_Q(\vx)\big\rangle\Big| \bigg]\nonumber\\
    &\overset{(b)}{\leq} \Em_{\hat\vphi}\left[\Em_\vn\Big[\vn\trp\vn\Big]+2\Em_\vn\bigg[\Big|\vn\trp( \widehat{\vmu}_P(\vx) - \widehat{\vmu}_Q(\tilde\vx))\Big| \bigg]\right], \nonumber\\
    &\overset{(c)}{=} D \sigma^2 \Delta_{\widehat{\vmu}_P}^2+ 2\sqrt{2} \Em_{\hat\vphi}\left[\|\widehat{\vmu}_P(\vx)-\widehat{\vmu}_Q(\vx)\|_2\right] \sigma \Delta_{\widehat{\vmu}_P}\frac{\Gamma\big((D+1)/2\big)}{\Gamma\big(D/2\big)}, \\
    &\overset{(d)}{=} \frac{D \sigma^2}{m^2}+  4\sqrt{2} \sigma \frac{\Gamma\big((D+1)/2\big)}{m\Gamma\big(D/2\big)},
    \label{eq:rf_noisy}
\end{align}
\end{proof}
where $(a)$ is by expanding two terms following their definitions:
$
\widetilde{\mathrm{MMD}}_{rf}^{2}(\vx, \tilde\vx) - \widehat{\mathrm{MMD}}_{rf}^{2}(\vx, \tilde\vx) =
\vn\trp\vn + 2 \vn\trp( \widehat{\vmu}_P(\vx) - \widehat{\vmu}_Q(\tilde\vx)).
$
$(b)$ is followed by triangle inequality.  
$(c)$ is followed by the second moment of the chi-square random variable (first term) and the first moment of the chi distribution (second term). 
$(d)$ is 
by taking the maximum over random features.  Under the random feature representation we use in our paper, the L2-norm of random features is bounded by 1. Hence,  
% because of the bound on $\Delta_{\widehat{\vmu}_P}^2$ in eq. 4, and
$\Em_{\hat\vphi}\left[\|\widehat{\vmu}_P(\vx)-\widehat{\vmu}_Q(\vx)\|_2\right] \leq \max_{\hat{\vphi}}\left[\|\widehat{\vmu}_P(\vx)-\widehat{\vmu}_Q(\vx)\|_2\right] \leq \max_{\hat{\vphi}}\left[\|\widehat{\vmu}_P(\vx)\|_2 +\|\widehat{\vmu}_Q(\vx)\|_2\right] \leq 1+1 =2$. 
 
% \newpage

\section{Derivation of feature maps for a product of two kernels}
\label{supp:kernel_product_featmap}
Under our assumption, we decompose the kernel below into two kernels:
\begin{align}
    & k((\vx,\vy), (\vx', \vy')) \nonumber \\
    &= k_\vx(\vx, \vx') k_\vy (\vy, \vy'), \mbox{ product of two kernels} \nonumber \\
    &\approx \left[\hat{\vphi}(\vx')\trp \hat{\vphi}(\vx)\right] \; \left[\vf(\vy)\trp \vf(\vy')\right],  \mbox{ random features for kernel } k_\vx \nonumber \\
    &= \mbox{Tr}\left(\hat{\vphi}(\vx')\trp \hat{\vphi}(\vx) \vf(\vy)\trp \vf(\vy')\right), \nonumber \\
    % &= \mbox{Tr}\left(\vf_\vy(\vy') \hat{\vf}_\vx(\vx')\trp \hat{\vf}_\vx(\vx) \vf_\vy(\vy)\trp \right), \\
    &= \mbox{vec}( \hat{\vphi}(\vx') \vf(\vy')\trp)\trp \mbox{vec}( \hat{\vphi}(\vx) \vf(\vy)\trp) = \hat{\vf}(\vx', \vy') \trp \hat{\vf}(\vx, \vy) \nonumber 
\end{align}

\section{Derivation of feature maps for a sum of two kernels}
\label{supp:kernel_sum_featmap}
Under our assumption, we compose the kernel below from the sum of two kernels:
\begin{align}
     &k((\vx_{num}, \vx_{cat}), (\vx'_{num}, \vx'_{cat})) \nonumber \\
     &= k_{num}(\vx_{num}, \vx'_{num}) + k_{cat} (\vx_{cat}, \vx'_{cat}), \nonumber \\
    &\approx \hat{\vphi}(\vx_{num})\trp \hat{\vphi}(\vx'_{num}) + \tfrac{1}{\sqrt{d_{cat}}} \vx_{cat}\trp\vx_{cat}', \nonumber \\
    &= \begin{bmatrix} 
    \hat{\vphi}(\vx_{num})  \nonumber \\
     \frac{1}{\sqrt{d_{cat}}} \vx_{cat}
    \end{bmatrix}^T
    \begin{bmatrix} 
    \hat{\vphi}(\vx_{num})  \nonumber \\
     \frac{1}{\sqrt{d_{cat}}} \vx_{cat}
    \end{bmatrix} \nonumber  \\
    &= \hat{\vh}(\vx_{num}, \vx_{cat})^T \hat{\vh}(\vx_{num}, \vx_{cat}).\nonumber
\end{align}

\section{Sensitivity of class counts}
\label{supp:weight_sensitivity}

Consider the vector of class counts $\vm = [m_1, \cdots, m_C  ],$  where each element $m_c$ is the number of samples with class $c$ in the dataset. The class counts of two neighbouring datasets $\Dat$ and $\Dat' = (\Dat \setminus \{\vx\}) \cup \{\vx'\}$ can differ in at most two entries $k,l$ and at most by 1 in either entry. Assuming $\vy \neq \vy'$, then for $\vy_k = 1$, $m_k = m'_k + 1$ and for $\vy'_l = 1$, $m'_l = m_l + 1$ and $m_i = m'_i$ in all other cases. If $\vy = \vy'$, then $\vm = \vm'$.
Letting $\vm$ and $\vm'$ denote the class counts of $\Dat$ and $\Dat'$ respectively, we get the following:
\begin{align}
\Delta_{\vm} &=  \max_{\Dat, \Dat'} \left\| \vm - \vm' \right\|_2
= \max_{\Dat, \Dat'} \sqrt{\sum_{i=1}^C m_i - m'_i} = \sqrt{2}
\end{align}

% Recall that the weights are defined by $\vomega = [\omega_1, \cdots, \omega_C  ],$  where each element is $\omega_c = \frac{m_c}{m}$. Here $m_c$ is the number of datapoints that belong to class $c$ and $m$ is the total number of datapoints in the training data, i.e., $\sum_{i=1}^C m_i = m$. 

% When there are two datapoints' difference (denote those datapoints by $\vx_i, \vx'_i$) in two neighbouring datasets, there will be two classes that are affected by the two datapoints. Below, without loss of generality,  we assume two datapoints difference appears in $m_C$ and $m_2$. 

% The sensitivity of $\vomega$ is
% %
% \begin{align}
%     \Delta_{\vomega} &= 
%     \max_{\Dat, \Dat'} \left\| \vomega(\Dat) - \vomega(\Dat') \right\|_2, \nonumber \\
%   &= \max_{\vx_i, \vx_i'}
%   \frac{1}{m}
%   \left\| 
%     \begin{bmatrix} 
%     m_1 \nonumber \\
%     m_2 \nonumber \\
%     \cdots \nonumber \\
%     m_C(\vx_i)
%     \end{bmatrix} -
%     \begin{bmatrix} 
%     m_1 \nonumber \\
%     m'_2(\vx_i') \nonumber \\
%     \cdots \nonumber \\
%     m'_C
%     \end{bmatrix}
%   \right\|_2, \nonumber \\
%   %
%  &=\max_{\vx_i, \vx_i'}
%   \frac{1}{m}
%  \left\| 
%   \begin{bmatrix} 
%     0 \nonumber \\
%     m_2 - m'_2(\vx_i') \nonumber \\
%     \cdots \nonumber \\
%     m_C(\vx_i) - m_C'
%     \end{bmatrix} 
%   \right\|_2, \nonumber \\
%   &=\max_{\vx_i, \vx_i'} \frac{1}{m}\sqrt{|m_2 - m'_2(\vx_i')|^2 + |m_C(\vx_i) - m_C'|^2 },\nonumber \\
%   &= \frac{\sqrt{2}}{m}
% \end{align} where the last line is due to $ \max_{\vx'_i}|m_2 - m'_2(\vx'_i)|= 1$ and $ \max_{\vx_i}|m_C(\vx_i) - m_C'|= 1$. 

\section{Sensitivity of \texorpdfstring{$\hat{\vmu}_P$}{mu-p} with homogeneous data}
\label{supp:sensitivity_mu_c_homogeneous}

Below, we show that the sensitivity of the data mean embedding for homogeneous labeled data is the same as for unlabeled data. In order, we first use the fact that $\Dat$ and $\Dat'$ are neighbouring, which implies that $m-1$ of the summands on each side cancel and we are left with the only distinct datapoints, which we denote as $(\vx, \vy)$ and $(\vx', \vy')$. We then apply the triangle inequality and the definition of $\vf$. As $\vy$ is a one-hot vector, all but one column of $\hat{\vphi}(\vx) \vy\trp$ are 0, so we omit them in the next step and finally use that $\| \hat{\vphi}(\vx) \|_2 = 1$.

\begin{align}
\Delta_{\hat{\vmu}_P} &= \max_{\Dat, \Dat'} \left\| \tfrac{1}{m}\sum_{(\vx_i, \vy_i) \in \Dat}\hat{\vf}(\vx_i, \vy_i) - \tfrac{1}{m}\sum_{(\vx'_i, \vy'_i) \in \Dat'}\hat{\vf}(\vx'_i, \vy'_i) \right\|_F \\
&= \max_{(\vx, \vy), (\vx', \vy)} \left\| \tfrac{1}{m}\hat{\vf}(\vx, \vy) - \tfrac{1}{m}\hat{\vf}(\vx', \vy') \right\|_F \\
&\leq \max_{(\vx, \vy)} \tfrac{2}{m} \left\| \hat{\vf}(\vx, \vy) \right\|_F \\
&= \max_{(\vx, \vy)} \tfrac{2}{m} \left\| \hat{\vphi}(\vx) \vy\trp \right\|_F \\
&= \max_{\vx} \tfrac{2}{m} \left\| \hat{\vphi}(\vx) \right\|_2 \\
&= \frac{2}{m}
\end{align}

% The sensitivity of $\vu_c$ is
% %
% \begin{align}
%     \Delta_{\vu_c} &= 
%     \max_{\Dat, \Dat'} \left\| \tfrac{1}{m}\sum_{\vx_i \in X^{(c)}_m}\hat{\vphi}(\vx_i) - \tfrac{1}{m}\sum_{\vx'_i \in X'^{(c)}_m}\hat{\vphi}(\vx'_i) \right\|_2 \\
%   &= \max_{\vx_i, \vx_i'}\left\| \tfrac{1}{m}
%     \hat{\vphi}(\vx_{i}) -\tfrac{1}{m}
%     \hat{\vphi}(\vx'_{i}) \right\|_2 \\
%     & \leq
%     %
%     \max_{\vx_i} \tfrac{2}{m} \left\| \hat{\vphi}(\vx_{i}) \right\|_2 \\
%     &\leq \tfrac{2}{m}
% \end{align} where the last line is because $\left\| \hat{\vphi}(\vx_i) \right\|_2 \leq 1$. 

\section{Sensitivity of \texorpdfstring{$\vmu_P$}{mu-P} with heterogeneous data}
\label{supp:sensitivity_mu_c_heterogeneous}

In the case of heterogeneous data, recall that
$\hat{\vh}(\vx^{(i)}_{num}, \vx^{(i)}_{cat})=\begin{bmatrix} 
    \hat{\vphi}(\vx_{num}^{(i)}) \\
     \frac{1}{\sqrt{d_{cat}}} \vx_{cat}^{(i)}
    \end{bmatrix}
$ and $\vmu_P = \frac{1}{m}\sum_{(\vx_i, \vy_i) \in \Dat}\hat{\vh}(\vx_i)\vy_i\trp$ where $\vx_i $ is the concatenation of $\vx_{num}^{(i)}$ and $\vx_{cat}^{(i)}$.
Analogous to the homogeneous case, we 
first derive that the labeled and unlabeled embedding have the same sensitivity (in \eqref{mu_het_middle}). We apply the definition of $\hat{\vh}$ and analyze the numerical and categorical parts separately, using the facts that $\| \hat{\vphi}(\vx) \|_2 = 1$ and, since $\vx_{cat}$ is binary, $\| \vx_{cat} \|_2 \leq \sqrt{d_{cat}}$.

\begin{align}
\Delta_{\vmu_P} &= 
\max_{\Dat, \Dat'} \left\| \tfrac{1}{m}\sum_{(\vx_i, \vy_i) \in \Dat}\hat{\vh}(\vx_i)\vy_i\trp - \tfrac{1}{m}\sum_{(\vx'_i, \vy'_i) \in \Dat'}\hat{\vh}(\vx'_i) \vy'_i\trp \right\|_F \\
&=
\max_{(\vx, \vy), (\vx', \vy')} \left\| \tfrac{1}{m} \hat{\vh}(\vx_i)\vy_i\trp - \tfrac{1}{m} \hat{\vh}(\vx'_i) \vy'_i\trp \right\|_F \\
&\leq
\max_{(\vx, \vy)} \tfrac{2}{m} \left\|  \hat{\vh}(\vx)\vy\trp \right\|_F \\
&=
\max_{\vx} \tfrac{2}{m} \left\|  \hat{\vh}(\vx) \right\|_2 \label{eq:mu_het_middle}\\
&=
\max_{\vx} \tfrac{2}{m} \left\|  \begin{bmatrix} 
\hat{\vphi}(\vx_{num}) \\
\frac{1}{\sqrt{d_{cat}}} \vx_{cat}
\end{bmatrix}
\right\|_2 \\
&=
\max_{\vx} \tfrac{2}{m} 
\sqrt{
\|\hat{\vphi}(\vx_{num})\|_2^2 + 
\|\tfrac{1}{\sqrt{d_{cat}}} \vx_{cat}\|_2^2
} \\
&=
\tfrac{2}{m}\sqrt{1 + \tfrac{d_{cat}}{d_{cat}}} \\
&= \frac{2\sqrt{2}}{m}
\end{align}

%   The sensitivity of $\vu_c$, assuming that two datasets differ at the $n$-th datapoint, is
%
% \begin{align}
%     \Delta_{\vu_c} &= 
%     \max_{\Dat, \Dat'} \left\| \tfrac{1}{m}\sum_{\vx_i \in X^{(c)}_m}\hat{\vh}(\vx_i) - \tfrac{1}{m}\sum_{\vx'_i \in X'^{(c)}_m}^{m_c}\hat{\vh}(\vx'_i) \right\|_2, \nonumber \\
%   &= \max_{\vx_n, \vx_n'}\left\| \tfrac{1}{m}\begin{bmatrix} 
%     \hat{\vphi}(\vx_{num}^{(n)}) \nonumber \\
%      \frac{1}{\sqrt{d_{cat}}} \vx_{cat}^{(n)}
%     \end{bmatrix} - \tfrac{1}{m}\begin{bmatrix} 
%     \hat{\vphi}({\vx'}_{num}^{(n)}) \nonumber \\
%      \frac{1}{\sqrt{d_{cat}}} {\vx'}_{cat}^{(n)}
%     \end{bmatrix} \right\|_2, \nonumber \\
%     & \leq
%     %
%     \max_{\vx_n} \tfrac{2}{m} \left\| \begin{bmatrix} 
%     \hat{\vphi}(\vx_{num}^{(n)}) \nonumber \\
%      \frac{1}{\sqrt{d_{cat}}} \vx_{cat}^{(n)}
%     \end{bmatrix} \right\|_2, \\
%     &\leq \max_{\vx_{cat}^{(n)}}  \tfrac{2}{m} \sqrt{1 + \tfrac{1}{d_{cat}} \sum_{j=1}^{d_{cat}} (\vx_{cat, j}^{(n)}})^2, \mbox{ since} \|\hat\vphi(\cdot) \|_2=1 \nonumber\\ 
%     & \leq \frac{2\sqrt{2}}{m}, 
% \end{align} where the last line is because $\vx_{cat}$ is a vector of binary variables. 

\setcounter{section}{8}  % skipping a section for consistency

% \section{Sensitivity of \texorpdfstring{$\widehat{\vmu}_{P_{\vx, \vy}} $}{mu-hat} for image data }
% % \section{Sensitivity of \textopdfstring{$\mu $}{mu-hat} for image data }

% % \section{Sensitivity of $\widehat{\mu}_{P_{\vx, \vy}} $ for image data }

% %
% %
% Using the product of two kernels
% \begin{align}
%   &\Delta_{\widehat{\vmu}_{P_{\vx, \vy}}}\nonumber \\
%   &= \max_{\Dat, \Dat'} \left\| \tfrac{1}{m}\sum_{i=1}^{m}\hat{\vf}(\vx_i, \vy_i)- \tfrac{1}{m}\sum_{i=1}^{m}\hat{\vf}(\vx'_i, \vy'_i)\right\|_2, \nonumber \\
% &= \max_{\vx_n, \vx_n'}\left\| \begin{bmatrix}
% \mathbf{0} & \cdots \frac{1}{m}\hat{\vphi}(\vx_n) & \cdots & \frac{1}{m}\hat{\vphi}(\vx'_n) \cdots \mathbf{0} 
% \end{bmatrix} \right\| \nonumber
% \end{align} where only two columns are non-zero, as there are only two datapoints difference in two datasets if the labels of these two points are different.  
% As the random features are norm bounded (by 1), the sensitivity is $\frac{\sqrt{2}}{m}$.
% On the other hand, if the labels of those two points are the same, only one column is non-zero, where the value is $\frac{1}{m}\hat{\vphi}(\vx_n) - \frac{1}{m}\hat{\vphi}(\vx'_n)$. Hence, the sensitivity is $\frac{2}{m}$. Therefore the worse case upper bound among these two cases is $\Delta_{\widehat{\vmu}_{P_{\vg, \vy}}} = \frac{2}{m}$.

\section{Variables in heterogeneous data are not treated as independent}
\label{supp:het_var_dependence}

While the impression may arise, our method does not assume independence between the continuous and the discrete variables, but models correlations between the two types of variables implicitly. With the sum of two kernels, the embedding is a concatenation of the two: $[E_x \phi_x(x), E_y \phi_y(y)]$, where $E_x$ means expectation wrt $p(x)$ and $E_y$ is wrt $p(y)$. To compute $p(x)$, we need $p(y)$ with which we marginalize out $y$, as $p(x) = \int p(x,y) dy$. This marginalization implicitly takes into account the correlation between the two. This is less explicit than the case using the product of two kernels. However, the sum kernel is chosen for computational tractability: a sum kernel in Fourier representation has $d_x+d_y$ features while a product kernel has $d_x \cdot d_y$.

\setcounter{section}{10}
% \section{R\'{e}nyi differential privacy}
% \label{sec:renyi_dp}

% \begin{defn}[$\alpha$-R\'{e}nyi Divergence] For two probability distributions $P, Q$ that have the same support, the $\alpha$ R\'{e}nyi divergence is 
% 	\begin{align}
% 	D_\alpha(P||Q) &= \frac{1}{\alpha-1} \log \mathbb{E}_{x \sim Q(x)} \left(\frac{P(x)}{Q(x)}\right)^\alpha
% 	\end{align} for $\alpha \in (1, \infty)$.
% \end{defn} 

% \section{Implementation details}

% % Our code is available on GitHub and contains
% % all further information regarding the implementation, including used hyper-parameters and instructions for reproducing the experiments. you can find it here: \url{ToDo}
% Our code is included in the supplementary material and contains all further information regarding the implementation, including used hyper-parameters and instructions for reproducing the experiments. A summary of the hyperparameters used for the experiments on tabular data is shown in \tabref{tab_params}
% Due to the size limit of the supplementary material, we omitted the real datasets in our data folder. Please refer to the README.md files for instructions on how to run the code.

%you can find it here: \url{ToDo}

\section{Heterogeneous and homogenous tabular data}
\label{supp:tab_data_details}

In this section we describe the tabular datasets we have used in our experiments with their respective sources. We include the details of data preprocessing in case it was performed on a dataset. The datasets in this form were used in all our experiments as well as the experiments on the benchmark methods.

\subsubsection*{Credit}

Credit card fraud detection dataset contains the categorized information of credit card transactions which were either fraudelent or not. The dataset comes from a Kaggle competition and is available at the source,
\url{https://www.kaggle.com/mlg-ulb/creditcardfraud}. The original data has 284807 examples, of which negative samples are 284315 and positive 492. The dataset has 31 categories, 30 numerical features and a binary label. We used all but the first feature (Time). 

\subsubsection*{Epileptic}

Epileptic dataset describes brain activity with numerical features being EEG recording at a different point in time. The dataset comes from the UCI database, \url{https://archive.ics.uci.edu/ml/datasets/Epileptic+Seizure+Recognition}. It contains 11500 data points, and 179 categories, 178 features and a label. The original dataset contains five different labels which we binarize into two states, seizure or no seizure. Thus, there are 9200 negative samples and 2300 positive samples.

\subsubsection*{Census}
The dataset can be downloaded by means of SDGym package, \url{https://pypi.org/project/sdgym/}. The dataset has 199523 examples, 187141 are negative and 12382 are positive. There are 40 categories and a binary label. This dataset contains 7 numerical and 33 categorical features.

\subsubsection*{Intrusion}

The dataset was used for The Third International Knowledge Discovery and Data Mining Tools Competition held at the Conference on Knowledge Discovery and Data Mining, 1999, and can be found at \url{http://kdd.ics.uci.edu/databases/kddcup99/kddcup99.html}. We used the file, kddcup.data\_10\_percent.gz. It is a multi-class dataset with five labels describing different types of connection intrusions. The labels were first grouped into five categories and due to few examples, we restricted the data to the top four categories.

\subsubsection*{Adult}

The dataset contains information about people's attributes and their respective income which has been thresholded and binarized. It has 22561 examples, and 14 features and a binary label. The dataset can be downloaded by means of SDGym package,\url{https://pypi.org/project/sdgym/}.
% FH: dataset numbers used to be wrong, also in the submission. took the liberty to correct the m here.

\subsubsection*{Isolet}

The dataset contains sound features to predict a spoken letter of alphabet. The inputs are sound features and the output is a latter. We binaried the labels into two classes, consonants and vowels. The dataset can be found at \url{https://archive.ics.uci.edu/ml/datasets/isolet}

\subsubsection*{Cervical}

This dataset is created with the goal to identify the risk factors associated with cervical cancer. It is the smallest dataset with 858 instances, and 35 attributes, of which The data can be found at 15 are numerical 24 are categorical (binary). The dataset can be found at \url{https://archive.ics.uci.edu/ml/datasets/Cervical+cancer+\%28Risk+Factors\%29}. The data, however, contains missing data. We followed the pre-processing suggested at \url{https://www.kaggle.com/saflynn/cervical-cancer-lynn} and further removed the data with the most missing values and replaced the rest with the category mean value.

\subsubsection*{Covtype}

The dataset describes forest cover type from cartographic variables.
The data can be found at \url{https://archive.ics.uci.edu/ml/datasets/covertype}. It contains 53 attributes and a multi-class label with 7 classes of forest cover types.

\subsection{The training}

We provide here the details of training procedure. Some of the datasets are very imbalanced, that is they contain much more examples with one label over the others. In attempt of making categories more balanced, we undersampled the class with the largest number of samples. The complexity of a dataset also determined the number of Fourier features we used. We also varied the batch size (we include the fraction of dataset used in a batch), and the number of epochs in the training. We provide the detailed parameter settings for each of the dataset in the following table.

\begin{table*}[htb]
\caption{Parameters settings for training tabular datasets}
% % \vskip 0.1in
\centering
\scalebox{0.9}{
% % \centering
% %\begin{tabular}{l|*{2}{S[table-format = 6] S[table-format = 1.2] S[table-format = 6]|}S[table-format = 1.3]}
 \begin{tabular}{c|ccc|ccc|c}
% \begin{tabular}
\toprule
& \multicolumn{3}{ c| }{\textbf{non-private}} & \multicolumn{3}{ c| }{\textbf{private}} & \\
& \multicolumn{1}{ c }{\#} & \multicolumn{1}{ c }{mini-batch}  & \multicolumn{1}{ c| }{\# Fourier} 
& \multicolumn{1}{ c }{\#} & \multicolumn{1}{ c }{mini-batch}  & \multicolumn{1}{ c| }{\# Fourier} 
& \multicolumn{1}{ c }{undersampling}  \\
& \multicolumn{1}{ c }{epochs} &  \multicolumn{1}{ c }{size} &  \multicolumn{1}{ c| }{features}
& \multicolumn{1}{ c }{epochs} &  \multicolumn{1}{ c }{size} &  \multicolumn{1}{ c| }{features}
& \multicolumn{1}{ c }{rate}
\\ \midrule
adult & 8000 & 0.1 & 50000 & 8000 & 0.1 & 1000 & 0.4 \\
census
& 200 & 0.5 & 10000 & 2000 & 0.5 & 10000 & 0.4 \\
cervical & 2000 & 0.6 & 2000 & 200 & 0.5 & 2000 & 1 \\
credit & 4000 & 0.6 & 50000 & 4000 & 0.5 & 5000 & 0.005 \\
epileptic & 6000 & 0.5 & 100000 & 6000 & 0.5 & 80000 & 1 \\
isolet & 4000 & 0.6 & 100000 & 4000 & 0.5 & 500 & 1 \\
covtype & 6000 & 0.05 & 1000 & 6000 & 0.05 & 1000 & 0.03 \\
intrusion & 10000 & 0.03 & 2000 & 10000 & 0.03 & 2000 & 0.1 \\
\bottomrule
\end{tabular}}
\label{tab:tab_params}
\end{table*}

\subsection{Detailed results for binary class dataset}

In the main text we included the details for a multi-class dataset and here we also include the results across all the classification methods for a binary dataset in \tabref{credit_best} and \tabref{credit_avg}. 
We also include the best and average F1-score over five runs for the respective classification methods in \tabref{intrusion_best} and \tabref{intrusion_avg}. 
Notice that this average corresponds to the average reported in Table 1 in the main text.

\npdecimalsign{.}
\nprounddigits{2}
\begin{table*}[htb]
\caption{Performance comparison on Credit dataset. The highest performance in five runs.}
% \vskip 0.1in
\centering
\scalebox{0.9}{
% \begin{tabular}{l | cc | cc | cc | cc | cc}
\begin{tabular}{l*{5}{|n{1}{2}n{1}{2}}}
\toprule
% \toprule
% \tabhead{dataset} & \tabhead{num samps}  & \tabhead{num classes} & \tabhead{type}  \\
% \midrule
& \multicolumn{2}{ c| }{Real} & \multicolumn{2}{ c| }{DP-CGAN}  & \multicolumn{2}{ c| }{DP-MERF} & \multicolumn{2}{ c| }{DP-CGAN}  & \multicolumn{2}{ c }{\textbf{DP-MERF}} \\ 
& \multicolumn{2}{ c| }{} & \multicolumn{2}{ c| }{(non-priv)} &  \multicolumn{2}{ c| }{(non-priv)} & \multicolumn{2}{ c| }{($1,10^{-5}$)-DP} & \multicolumn{2}{ c }{($1,10^{-5}$)-DP} \\ 
% \hline
\midrule
& ROC & PRC & ROC & PRC & ROC & PRC & ROC & PRC & ROC & PRC 
\\ 
Logistic Regression & 0.946 & 0.911 & 0.832 & 0.374 & 0.916 & 0.787 & 0.737 & 0.522 & 0.775 &  0.609 \\
Gaussian Naive Bayes & 0.9 &  0.801 & 0.85  &  0.394 & 0.92 &  0.757 & 0.8 & 0.546 & 0.653 & 0.484 \\
Bernoulli Naive Bayes & 0.886 & 0.838 & 0.576 & 0.192 & 0.893 & 0.817 & 0.665 & 0.424 & 0.897 & 0.742 \\
Linear SVM & 0.923 & 0.89 & 0.838 & 0.481 & 0.906 & 0.65 & 0.776 & 0.448 & 0.643 & 0.377 \\
Decision Tree & 0.911 & 0.822 & 0.742 & 0.321 & 0.916 & 0.692 & 0.576 & 0.222 & 0.719 & 0.582 \\
LDA & 0.873 & 0.819 & 0.856 & 0.528 & 0.815 & 0.677 & 0.58 & 0.241 & 0.694 & 0.506 \\
Adaboost & 0.935 & 0.889 & 0.833 & 0.506 & 0.926 & 0.849 & 0.615 & 0.322 & 0.75 & 0.628 \\
Bagging & 0.909 & 0.84 & 0.785 & 0.423 & 0.911 & 0.786 & 0.567 & 0.211 & 0.735 & 0.605 \\
Random Forest & 0.927 & 0.896 & 0.819 & 0.54 & 0.923 & 0.859 & 0.634 & 0.305 & 0.745 & 0.62 \\
GBM & 0.935 & 0.889 & 0.854 & 0.538 & 0.939 & 0.847 & 0.576 & 0.222 & 0.74 & 0.613 \\
Multi-layer perceptron & 0.923 & 0.89 & 0.828 & 0.467 & 0.91 & 0.74 & 0.779 & 0.552 & 0.659 & 0.436 \\
XGBoost & 0.944 & 0.913 & 0.805 & 0.487 & 0.936 & 0.866 & 0.703 & 0.532 & 0.724 & 0.59 \\
\midrule
Average & 0.914 & 0.863 & 0.802 & 0.438 & 0.909 & 0.777 & 0.667 & 0.379 & 0.728 & 0.566 \\
\bottomrule
\end{tabular}\label{tab:credit_best}} 
\end{table*}
\npnoround

\npdecimalsign{.}
\nprounddigits{3}
\begin{table*}[htb]
\caption{Performance comparison on Credit dataset. The average performance over five runs.}
\vskip 0.15in
\centering
\scalebox{1.1}{
% \begin{tabular}{l|cc|cc}
\begin{tabular}{l*{2}{|n{1}{3}n{1}{3}}}
\toprule
& \multicolumn{2}{ c| }{DP-MERF} & \multicolumn{2}{ c }{DP-MERF} \\
& \multicolumn{2}{ c| }{(non-private)} & \multicolumn{2}{ c }{(private)} \\
& ROC & PRC & ROC
& PRC \\
\midrule
Logistic Regression & 0.9186 & 0.8082 & 0.796 & 0.665 \\
Gaussian Naive Bayes & 0.8976 & 0.7252 & 0.729 & 0.582 \\
Bernoulli Naive Bayes & 0.8794 & 0.7906 & 0.752 & 0.586 \\
Linear SVM & 0.8764 & 0.667 & 0.742 & 0.549 \\
Decision Tree & 0.9006 & 0.6996 & 0.775 & 0.650 \\
LDA & 0.8382 & 0.6968 & 0.725 & 0.544 \\
Adaboost & 0.912 & 0.8276 & 0.787 & 0.689 \\
Bagging & 0.9094 & 0.8046 & 0.811 & 0.709 \\
Random Forest & 0.9114 & 0.84 & 0.786 & 0.686 \\
GBM & 0.917 & 0.8124 & 0.807 & 0.707 \\
Multi-layer perceptron & 0.9054 & 0.7772 & 0.747 & 0.570 \\
XGBoost & 0.9154 & 0.8374 & 0.812 & 0.716 \\
\midrule
Average & 0.898 & 0.774 & 0.772 & 0.638 \\
\bottomrule
\end{tabular}\label{tab:credit_avg}}
\end{table*}
\npnoround

%%%%%%%%%%%%%%%%%%%%%%%%%%%%%%%%%%%%%%%%%%%%%%%%%%%%

\begin{table*}[!t]
\caption{Performance comparison on Intrusion dataset. The highest performance in five runs.}
\vskip 0.15in
\centering
\scalebox{1.1}{
\begin{tabular}{l*{5}{|c}}
\toprule
% \toprule
% \tabhead{dataset} & \tabhead{num samps}  & \tabhead{num classes} & \tabhead{type}  \\
% \midrule
& Real & DP-CGAN  & DP-MERF & DP-CGAN   & DP-MERF \\ 
&  & (non-priv) &  (non-priv) & ($1,10^{-5}$)-DP & ($1,10^{-5}$)-DP \\ 
% \hline
\midrule
%\begin{tabular}{cccccc}
Logistic Regression & 0.948 & 0.710 & 0.926 & 0.567 & 0.940 \\
Gaussian Naive Bayes & 0.757 & 0.503 & 0.804 & 0.215 & 0.736 \\
Bernoulli Naive Bayes & 0.927 & 0.693 & 0.822 & 0.475 & 0.755 \\
Linear SVM & 0.983 & 0.639 & 0.922 & 0.915 & 0.937 \\
Decision Tree & 0.999 & 0.496 & 0.862 & 0.153 & 0.952 \\
LDA & 0.990 & 0.224 & 0.910 & 0.652 & 0.950 \\
Adaboost & 0.947 & 0.898 & 0.924 & 0.398 & 0.503 \\
Bagging & 1.000 & 0.499 & 0.914 & 0.519 & 0.956 \\
Random Forest & 1.000 & 0.497 & 0.941 & 0.676 & 0.943 \\
GBM & 0.999 & 0.501 & 0.924 & 0.255 & 0.933 \\
Multi-layer perceptron & 0.997 & 0.923 & 0.933 & 0.733 & 0.957 \\
XGBoost & 0.999 & 0.886 & 0.921 & 0.751 & 0.933 \\ \midrule
Average & 0.962 & 0.622 & 0.900 & 0.526 & 0.875 \\
\bottomrule
\end{tabular}
}\label{tab:intrusion_best}
\end{table*}

\clearpage

\begin{table*}[htb]
\caption{Performance comparison on Intrusion dataset. The average performance as F1 score over five runs.}
% \vskip 0.1in
\centering
\scalebox{0.9}{
\begin{tabular}{l|c|c}
\toprule
& DP-MERF & DP-MERF \\
& (non-private) & (private) \\
\midrule
Logistic Regression & 0.891 & 0.928 \\
Gaussian Naive Bayes & 0.845 & 0.792 \\
Bernoulli Naive Bayes & 0.454 & 0.508 \\
Linear SVM & 0.890 & 0.917 \\
Decision Tree & 0.911 & 0.907 \\
LDA & 0.859 & 0.925 \\
Adaboost & 0.899 & 0.592 \\
Bagging & 0.926 & 0.922 \\
Random Forest & 0.904 & 0.923 \\
GBM & 0.901 & 0.926 \\
Multi-layer perceptron & 0.898 & 0.941 \\
XGBoost & 0.891 & 0.921 \\ 
\midrule
Average & 0.856 & 0.850 \\
\bottomrule
\end{tabular}\label{tab:intrusion_avg}}
\end{table*}

%%%%%%%%%%%%%%%%%%%

\section{Image data} \label{supp:mnist_data_details}

\subsection{Datasets}

Both digit and fashion MNIST datasets are loaded through the torchvision package and used without further preprocessing. 
Both datasets of size 60000 consist of samples from 10 classes, which are close to perfectly balanced. Each sample is a 28x28 pixel image and thus of significantly higher dimensionality than the tabular data we tested.

\subsection{Detailed results}

A detailed version of the results summarized in \tabref{summary_all_image} of the paper are shown below, for digit MNIST is \tabref{dmnist} and fashion MNIST in \tabref{fmnist}. All scores are the average of 5 independent runs of training a generator and evaluating the synthetic data it produced. The tables show that DP-MERF consistently outperforms the other approaches across models. The only exceptions are Gaussian Naive Bayes and XGBoost on MNIST, where GS-WGAN and DP-CGAN respectively perform slightly better.

\npdecimalsign{.}
\nprounddigits{2}
\begin{table*}[htb]
\caption{Test accuracy on digit MNIST data. Average over 5 runs (data generation \& model training). Best scores among private models are bold.}
\vskip 0.1in
\centering
\scalebox{0.95}{
% \begin{tabular}{lcccccc}
\begin{tabular}{l|c|c|c|c|c|c|c}
% \begin{tabular}{l|n{1}{2}n{1}{2}|n{1}{2}n{1}{2}|n{1}{2}n{1}{2}|n{1}{2}n{1}{2}|n{1}{2}n{1}{2}|n{1}{2}n{1}{2}}
\toprule
& Real & DP-CGAN & DP-GAN & GS-WGAN & DP-MERF & DP-MERF & DP-MERF  \\
& & $\epsilon=9.6$ & $\epsilon=9.6$ & $\epsilon=10$ & $\epsilon=\infty$ & $\epsilon=1$ & $\epsilon=0.2$ \\
% & Real & DP-CGAN &  DP-MERF+AE & DP-MERF  & DP-MERF & DP-MERF  \\
% & data & $\epsilon=9.6$ & $\epsilon=9.6$ & $\epsilon=9.6$ & $\epsilon=2.9$ & $\epsilon=1.3$ \\
\midrule                % real    dpcgan  dpgan  gswan
Logistic Regression     & 0.930 & 0.600 & 0.702 & 0.741 & 0.772 & \textbf{0.769}  & 0.772  \\
Random Forest           & 0.969 & 0.638 & 0.538 & 0.460 & 0.714 & \textbf{0.685}  & 0.702  \\
Gaussian Naive Bayes    & 0.560 & 0.310 & 0.364 & \textbf{0.576} & 0.527 & 0.545  & 0.539  \\
Bernoulli Naive Bayes   & 0.840 & 0.610 & 0.702 & 0.699 & 0.746 &\textbf{ 0.750}  & 0.780  \\
Linear SVM              & 0.920 & 0.550 & 0.700 & 0.704 & 0.756 & \textbf{0.746}  & 0.726  \\
Decision Tree           & 0.880 & 0.340 & 0.255 & 0.326 & 0.443 & \textbf{0.456}  & 0.346  \\
LDA                     & 0.879 & 0.590 & 0.694 & 0.732 & 0.789 & \textbf{0.793}  & 0.753  \\
Adaboost                & 0.729 & 0.254 & 0.159 & 0.170 & 0.441 & \textbf{0.456}  & 0.362  \\
MLP                     & 0.978 & 0.564 & 0.652 & 0.744 & 0.807 & \textbf{0.807}  & 0.768  \\
Bagging                 & 0.928 & 0.430 & 0.282 & 0.387 & 0.624 & \textbf{0.602}  & 0.508  \\
GBM                     & 0.909 & 0.460 & 0.205 & 0.362 & 0.678 & \textbf{0.659}  & 0.552  \\
XGBoost                 & 0.912 & \textbf{0.614} & 0.459 & 0.408 & 0.525 & 0.555  & 0.509  \\
\midrule
Average                 & 0.870 & 0.500 & 0.476 & 0.526 & 0.652 & \textbf{0.652}  & 0.610  \\
\bottomrule
\end{tabular}
}\label{tab:dmnist}
\end{table*}
\npnoround

\npdecimalsign{.}
\nprounddigits{2}
\begin{table*}[htb]
\caption{Test accuracy on fashion MNIST data. Average over 5 runs (data generation \& model training). Best scores among private models are bold.}
\vskip 0.1in
\centering
\scalebox{0.95}{
% \begin{tabular}{lcccccc}
\begin{tabular}{l|c|c|c|c|c|c|c}
% \begin{tabular}{l|n{1}{2}n{1}{2}|n{1}{2}n{1}{2}|n{1}{2}n{1}{2}|n{1}{2}n{1}{2}|n{1}{2}n{1}{2}|n{1}{2}n{1}{2}}
\toprule
& Real & DP-CGAN & DP-GAN & GS-WGAN & DP-MERF  & DP-MERF & DP-MERF  \\
& & $\epsilon=9.6$ & $\epsilon=9.6$ & $\epsilon=10$ & $\epsilon=\infty$ & $\epsilon=1$ & $\epsilon=0.2$ \\
% & Real & DP-CGAN &  DP-MERF+AE & DP-MERF  & DP-MERF & DP-MERF  \\
% & data & $\epsilon=9.6$ & $\epsilon=9.6$ & $\epsilon=9.6$ & $\epsilon=2.9$ & $\epsilon=1.3$ \\
\midrule                % real    dpcgan  dpgan 
Logistic Regression     & 0.844 & 0.461 & 0.626 & 0.674 & 0.725 & \textbf{0.728} & 0.714  \\
Random Forest           & 0.875 & 0.482 & 0.573 & 0.498 & 0.657 & \textbf{0.684} & 0.553  \\
Gaussian Naive Bayes    & 0.585 & 0.286 & 0.149 & 0.505 & 0.598 & \textbf{0.575} & 0.467  \\
Bernoulli Naive Bayes   & 0.648 & 0.497 & 0.592 & 0.558 & 0.602 & \textbf{0.604} & 0.629  \\
Linear SVM              & 0.839 & 0.389 & 0.613 & 0.639 & 0.685 & \textbf{0.684} & 0.697  \\
Decision Tree           & 0.790 & 0.315 & 0.317 & 0.389 & 0.433 & \textbf{0.462} & 0.352  \\
LDA                     & 0.799 & 0.490 & 0.638 & 0.653 & 0.735 & \textbf{0.733} & 0.701  \\
Adaboost                & 0.561 & 0.217 & 0.224 & 0.275 & 0.291 & \textbf{0.359} & 0.258  \\
MLP                     & 0.879 & 0.459 & 0.601 & 0.647 & 0.739 & \textbf{0.738} & 0.696  \\
Bagging                 & 0.841 & 0.309 & 0.410 & 0.413 & 0.576 & \textbf{0.593} & 0.372  \\
GBM                     & 0.834 & 0.331 & 0.254 & 0.352 & 0.626 & \textbf{0.624} & 0.429  \\
XGBoost                 & 0.826 & 0.489 & 0.478 & 0.427 & 0.596 & \textbf{0.610} & 0.445  \\
\midrule
Average                 & 0.780 & 0.390 & 0.457 & 0.502 & 0.605 & \textbf{0.616} & 0.526  \\
\bottomrule
\end{tabular}
}\label{tab:fmnist}
\end{table*}
\npnoround

\section{Comparison with other methods}\label{supp:comparison_others}

\subsection{Comparison with \texorpdfstring{\cite{BalTolSch18}}{Balog et al.}.}

\vspace{-0.3cm}
\begin{wrapfigure}{r}{0.4\textwidth}
  % To control the width of the figure, change the above width setting.
  \begin{center}
    \vspace{-13mm}
    \includegraphics[width=1\linewidth]{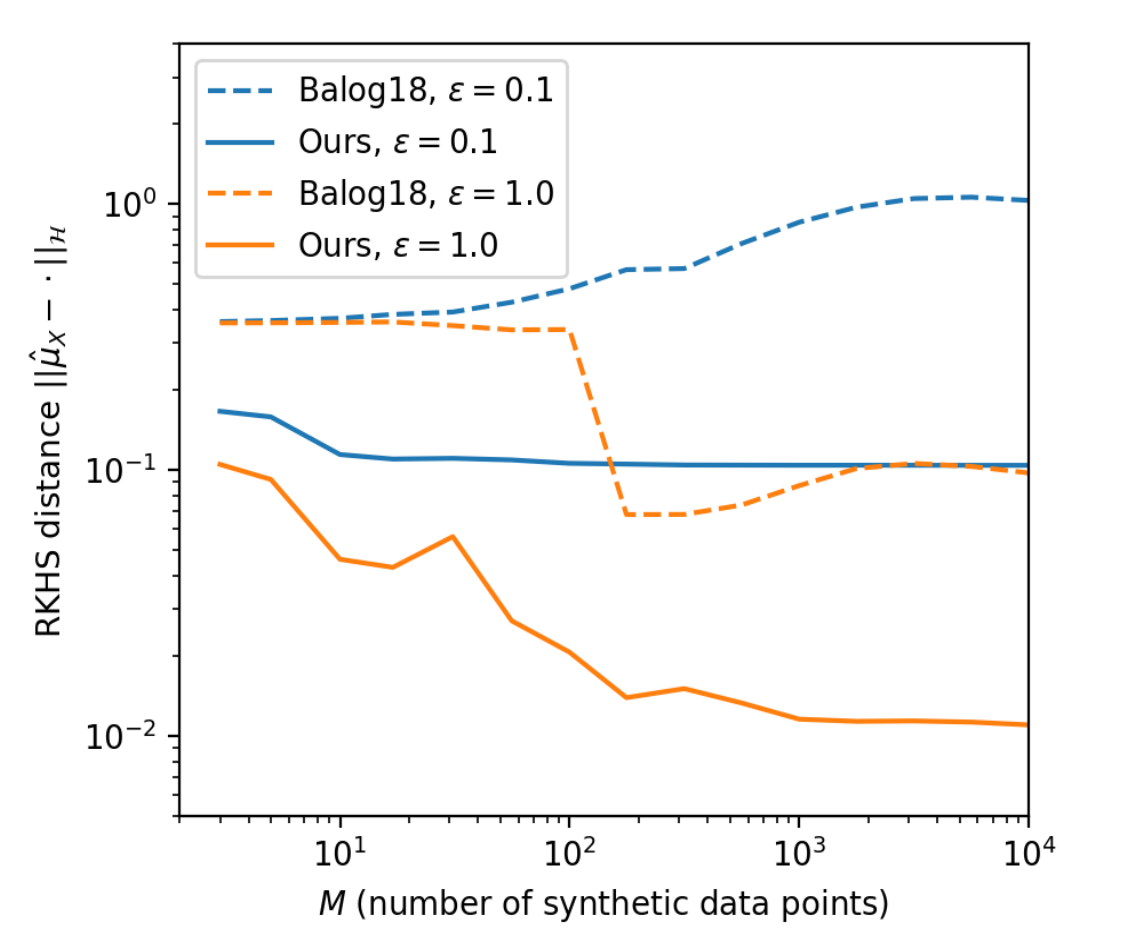}
  \end{center}
  \vspace{-4mm}
  \caption{Comparison to \cite{BalTolSch18}. }
  \label{fig:comparison_to_Balog18}
  \vspace{-4mm}
\end{wrapfigure}
%  \vspace{-10mm}
Algorithm 2 in  \cite{BalTolSch18} uses the random features similar to ours, while it releases the privatized mean embedding in terms of a weighted sum of feature maps evaluated at synthetic datapoints. The challenge is that optimizing for the synthetic datapoints using the reduced-set method becomes harder in high dimensions.
To illustrate this point, we took the simulated data generated from  
\textit{5-dimensional} mixture of Gaussians (the dataset \cite{BalTolSch18} used). Unlike \cite{BalTolSch18}, our method directly trains a neural-net based generator, which can  effectively approximate the privatized kernel mean embedding of the data. As a result, our method reduces the distance (this metric \cite{BalTolSch18} used) between between the true kernel mean embedding $\hat{\mu}_x$ and that of the released dataset as we increase the number of synthetic datapoints, as shown in \figref{comparison_to_Balog18}.

\subsection{Comparison with \texorpdfstring{PrivBayes \cite{privbayes}}{PrivBayes}.}

We compare our method to PrivBayes
\cite{privbayes} using the published code from \cite{mckenna2019graphical}, which builds on the original code with \cite{zhang2018ektelo} as a wrapper. 
We test the model on the Adult and Census datasets used in our paper by creating a version $\mathcal{D}$ of the dataset where all continuous features are discretized, and a version $\mathcal{D}^*$ where the domain of all features is reduced to a max of 15 to reduce complexity. Following \cite{privbayes}, we measure $\alpha$-way marginals for varying levels of $\epsilon$-DP and compare them to DP-MERF at $(\epsilon, \delta)$-DP with $\delta=10^{-5}$. 
%and "usefulness" parameter $\theta$, which determines (along with other parameters) the degree of the constructed graph. Smaller $\theta$ leads to higher degree. 
Optimizing the "usefulness" parameter $\theta$, we find, as in \cite{privbayes}, that $\theta=4$ is close to optimal in most settings. Results for the best $\theta$ are shown.
% , except on Adult $\mathcal{D}^*$ at $\epsilon=0.1$, where we use $\theta=1$ instead. 
We observe that PrivBayes performs better at $\epsilon=1$, but is more affected by increased noise, so at $\epsilon=0.3$ the methods are roughly tied and at $\epsilon=0.1$ DP-MERF has lower error.

\vspace{-0.3cm}
\begin{table}[htb]
\centering
\scalebox{0.85}{
% \begin{tabular}{lc*{2}{|ccc}||lc*{2}{|ccc}}
\begin{tabular}{lc|ccc|ccc||lc|ccc|ccc}
\toprule
\multicolumn{2}{ c| }{\multirow{2}{*}{Adult}} & \multicolumn{3}{ c| }{PrivBayes} & \multicolumn{3}{ c|| }{DP-MERF} & 
\multicolumn{2}{ c| }{\multirow{2}{*}{Census}} & \multicolumn{3}{ c| }{PrivBayes} & \multicolumn{3}{ c }{DP-MERF}\\ 
& & $\epsilon{=}1$ & $\epsilon {=} 0.3$ & $\epsilon{=}0.1$ & $\epsilon{=}1$ & $\epsilon{=}0.3$ & $\epsilon{=}0.1$
& & & $\epsilon{=}1$ & $\epsilon{=}0.3$ & $\epsilon{=}0.1$ & $\epsilon{=}1$ & $\epsilon{=}0.3$ & $\epsilon{=}0.1$
\\
\midrule
\multirow{2}{*}{$\mathcal{D}$} & $\alpha{=}3$ & 0.275 & 0.446 & 0.577 & 0.348 & 0.405 & 0.480
& \multirow{2}{*}{$\mathcal{D}$} & $\alpha{=}2$ & 0.131 & 0.180 & 0.291 & 
% 0.203 & 
% % 0.192 % weird score
% 0.221  % worse score
% & 0.223 \\

0.172 &
0.190 &
0.222
\\
& $\alpha{=}4$ & 0.377 & 0.547 & 0.673 & 0.468 & 0.508 & 0.590
& & $\alpha{=}3$ & 0.264 & 0.323 & 0.429 & 
% 0.321 & 
% 0.315 % weird score 
% 0.331  % worse score
% & 0.339 \\
%
0.291 &
0.302
& 0.337 \\
\midrule
\multirow{2}{*}{$\mathcal{D}^*$} & $\alpha{=}3$ & 0.182 & 0.284 & 0.317 & 0.235 & 0.287 & 0.352
& \multirow{2}{*}{$\mathcal{D}^*$} & $\alpha{=}2$ & 0.111 & 0.136 & 0.199 & 0.139 & 
% 0.152  % weird score
0.140  % worse score
& 0.176 \\
& $\alpha{=}4$ & 0.257 & 0.371 & 0.401 & 0.301 & 0.363 & 0.453
& & $\alpha{=}3$ &  0.199 & 0.258 & 0.325 & 
%0.250 & 
% 0.256 % weird score
%0.257  % worse score
%& 0.269
%
0.228 & 0.234 & 0.269
\\
\bottomrule
\end{tabular}} 
\end{table}

%
% We agree that this comparison is important to fully evaluate our method and will include it in the final paper. 
It is important to stress that our approach is more general than PrivBayes in that \textit{(i)} it does not require discretization of the data and \textit{(ii)} scales to higher dimensionality and arbitrary domains. Bayesian network construction in PrivBayes for a $k$-degree graph with $d$ nodes (i.e. features) compares up to ${d \choose k}$ options on each iteration, which restricts $k$ to small values if $d$ is large. This means, e.g., testing PrivBayes on binarized MNIST ($d=784$) with any $k>2$ is infeasible.

\end{document}